\def\eqref#1{equation~\ref{#1}}
\def\Eqref#1{Equation~\ref{#1}}
\def\1{\bm{1}}
\DeclareMathAlphabet{\mathsfit}{\encodingdefault}{\sfdefault}{m}{sl}
\SetMathAlphabet{\mathsfit}{bold}{\encodingdefault}{\sfdefault}{bx}{n}
\theoremstyle{plain}
\newtheorem{theorem}{Theorem}[section]
\newtheorem{lemma}[theorem]{Lemma}
\theoremstyle{definition}
\theoremstyle{remark}
\title{AlgoFormer: An Efficient Transformer Framework with Algorithmic Structures}
\author{
Yihang Gao\textsuperscript{1}\thanks{Equal contribution.} \quad
Chuanyang Zheng\textsuperscript{2}\footnotemark[1] \quad Enze Xie\textsuperscript{3} \quad Han Shi\textsuperscript{3} \quad Tianyang Hu\textsuperscript{1} \quad Yu Li\textsuperscript{2} \quad Michael Ng\textsuperscript{4} \quad Zhenguo Li\textsuperscript{3} \quad Zhaoqiang Liu\textsuperscript{5}\thanks{Corresponding author.} \vspace{0.3em}\\
\textnormal{\texttt{\{gaoyh,t.hu\}@nus.edu.sg} \quad \texttt{\{cyzheng21,liyu\}@cse.cuhk.edu.hk} \quad \texttt{\{xieenze\}@connenct.hku.hk} \quad \texttt{\{shi.ha,li.zhenguo\}@huawei.com} \quad \texttt{michael-ng@hkbu.edu.hk} \quad \texttt{zqliu12@gmail.com}} \vspace{0.3em}\\
\textnormal{\textsuperscript{1}National University of Singapore 
\quad
\textsuperscript{2}Chinese University of Hong Kong
\quad
\textsuperscript{3}Huawei Noah's Ark Lab
\quad
\textsuperscript{4}Hong Kong Baptist University
\textsuperscript{5}University of Electronic Science and Technology of China
}
}
\begin{document}

\maketitle

\begin{abstract}
Besides natural language processing, transformers exhibit extraordinary performance in solving broader applications, including scientific computing and computer vision. Previous works try to explain this from the expressive power and capability perspectives that standard transformers are capable of performing some algorithms. To empower transformers with algorithmic capabilities and motivated by the recently proposed looped transformer \citep{yang2023looped, giannou2023looped}, we design a novel transformer framework, dubbed Algorithm Transformer (abbreviated as AlgoFormer).
We provide an insight that efficient transformer architectures can be designed by leveraging prior knowledge of tasks and the underlying structure of potential algorithms.
Compared with the standard transformer and vanilla looped transformer, the proposed AlgoFormer can perform efficiently in algorithm representation in some specific tasks. 
In particular, inspired by the structure of human-designed learning algorithms, our transformer framework consists of a pre-transformer that is responsible for task preprocessing, a looped transformer for iterative optimization algorithms, and a post-transformer for producing the desired results after post-processing. We provide theoretical evidence of the expressive power of the AlgoFormer in solving some challenging problems, mirroring human-designed algorithms. 
Furthermore, some theoretical and empirical results are presented to show that the designed transformer has the potential to perform algorithm representation and learning.
Experimental results demonstrate the empirical superiority of the proposed transformer in that it outperforms the standard transformer and vanilla looped transformer in some specific tasks.
An extensive experiment on real language tasks (e.g., neural machine translation of German and English, and text classification) further validates the expressiveness and effectiveness of AlgoFormer.
\end{abstract}

\section{Introduction}

The emergence of the transformer architecture \citep{vaswani2017attention} marks the onset of a new era in natural language processing. Transformer-based large language models (LLMs), such as BERT \citep{devlin2019bert} and GPT-3 \citep{brown2020language}, revolutionized impactful language-centric applications, including language translation \citep{vaswani2017attention, raffel2020exploring}, text completion/generation \citep{radford2019language, brown2020language}, sentiment analysis \citep{devlin2019bert}, and mathematical reasoning \citep{imani2023mathprompter, yu2023metamath}. Beyond the initial surge in LLMs, these transformer-based models have found extensive applications in diverse domains such as computer vision \citep{dosovitskiy2021an}, time series \citep{li2019enhancing}, bioinformatics \citep{zhang2023applications}, and addressing various physical problems \citep{cao2021choose}. While many studies have concentrated on employing transformer-based models to tackle challenging real-world tasks, yielding superior performances compared to earlier models, the mathematical understanding of transformers remains incomplete.


\citet{garg2022can} empirically investigate the performance of transformers in in-context learning, where the input tokens are input-label pairs generated from classical machine learning models, e.g., (sparse) linear regression and decision tree. They find that transformers can perform comparably as standard human-designed machine learning algorithms. Some subsequent works try to explain the phenomenon. \citet{akyurek2023what} characterize decoder-based transformer as employing stochastic gradient descent for linear regression. \citet{bai2023transformers} demonstrate that transformers can address statistical learning problems and employ algorithm selection, such as ridge regression, Lasso, and classification problems. \citet{zhang2023trained} and \citet{huang2023context} simplify the transformer model with reduced active parameters, yet reveal that the simplified transformer retains sufficient expressiveness for in-context linear regression problems. In \citet{ahn2023transformers}, transformers are extended to implement preconditioned gradient descent. The looped transformer is proposed in~\cite{giannou2023looped}, and is shown to have the potential to perform basic operations (e.g., addition and multiplication), as well as implicitly learn iterative algorithms ~\cite{yang2023looped}. More related and interesting studies can be found in \cite{huang2023context, von2023transformers, mahankali2023one}.

In this paper, inspired by the recently proposed looped transformer \citep{yang2023looped, giannou2023looped}, we propose a novel transformer framework, which we refer to as AlgoFormer, and strictly enforce it as an algorithm learner by the structure regularization on its architecture. The transformer framework consists of three modules (sub-transformers), i.e., the pre-, looped, and post-transformers, designed to perform distinct roles. The pre-transformer is responsible for preprocessing the input data, and formulating it into some mathematical problems. The looped transformer acts as an iterative algorithm in solving the hidden problems. Finally, the post-transformer handles suitable postprocessing to produce the desired results. In contrast to standard transformers, the AlgoFormer is more likely to implement algorithms, due to its algorithmic structures shown in Figure \ref{fig:alg_structure}.

\section{Motivation and Proposed Method}
\label{sec:motivation}
In this section, we mainly discuss the construction and intuition of the AlgoFormer. Its advantages over the standard transformer is then conveyed. Before going into details, we first elaborate the mathematical definition of transformer layers. 

\subsection{Preliminaries}
\vskip -1em
A one-layer transformer is mathematically formulated as:
\begin{equation}
\label{def_transformer}
    \begin{split}
        & \text{Attn}\left(\bm{X}\right) = \bm{X} + \sum_{i=1}^{h} \bm{W}_{V}^{(i)} \bm{X} \cdot \text{softmax}\left(\bm{X}^{\top}\bm{W}_{K}^{(i)\top}\bm{W}_{Q}^{(i)}\bm{X}\right), \\
        & \text{TF}\left(\bm{X}\right) = \text{Attn}\left(\bm{X}\right) + \bm{W}_2 \text{ReLU}\left(\bm{W}_1 \text{Attn}\left(\bm{X}\right) + \bm{b}_1  \right) + \bm{b}_2,
    \end{split}
\end{equation}
\vskip -1em
where $\bm{X} \in \mathbb{R}^{D \times N}$ is the input tokens; $h$ is the number of heads; $\{\bm{W}_{V}^{(i)}, \bm{W}_{K}^{(i)},\bm{W}_{Q}^{(i)}\}$ denote value, key and query matrices at $i$-th head, respectively; $\{\bm{W}_2, \bm{W}_1, \bm{b}_2, \bm{b}_1\}$ are parameters of the shallow feed-forward ReLU neural network. 
The attention layer with softmax activation function mostly exchanges information between different tokens by the attention mechanism. Subsequently, the feed-forward ReLU neural network applies nonlinear transformations to each token vector and extracts more complicated and versatile representations.

\subsection{Algorithmic Structures of Transformers}

\begin{figure}[t!]
    \centering
    \includegraphics[width=0.6\textwidth]{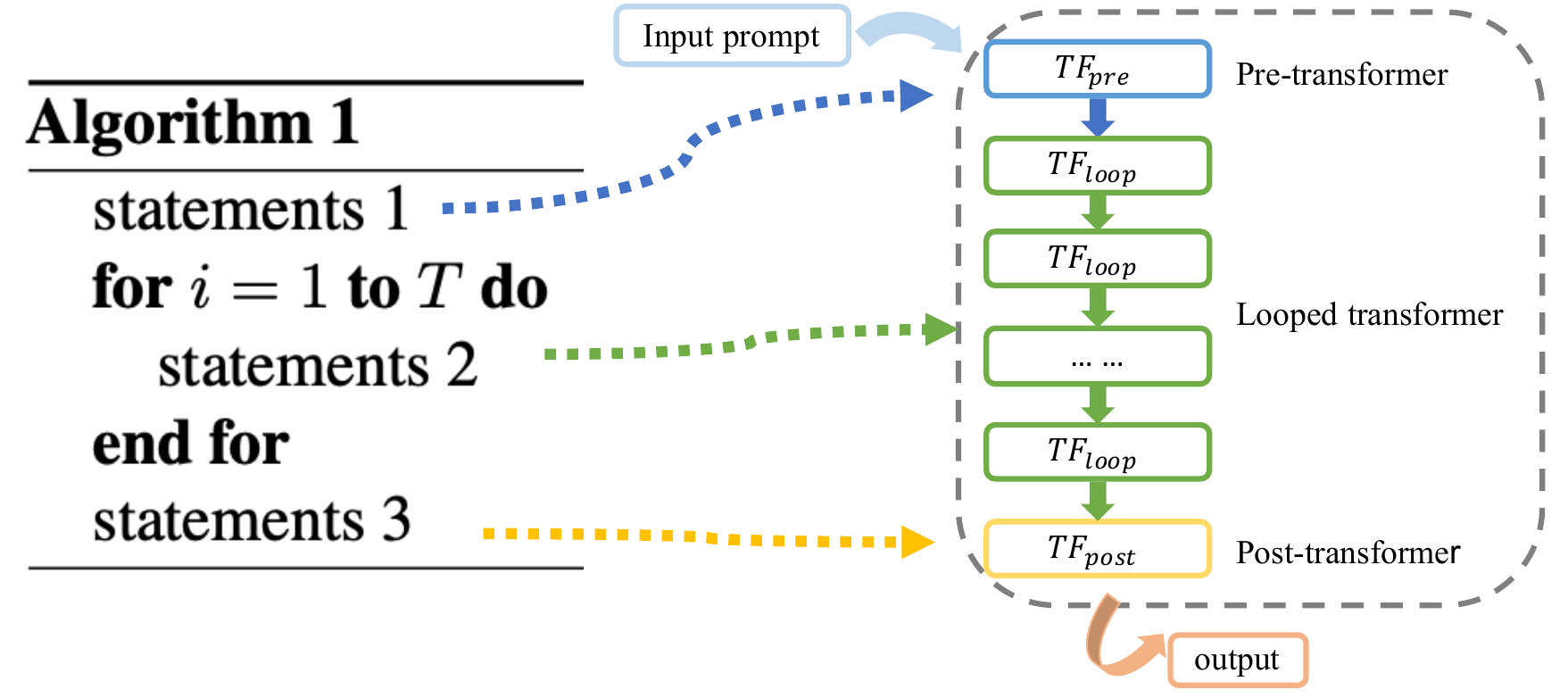}
    \caption{Algorithmic structure of the AlgoFormer. 
    Here, $\text{TF}_{\text{pre}}$, $\text{TF}_{\text{loop}}$, and $\text{TF}_{\text{post}}$ are multi-layer transformers;  ``statements'' represent some fundamental operations in classical algorithms.}
    \label{fig:alg_structure}
    \vskip -1em
\end{figure}

As discussed in the introduction, rather than simply interpreting it as an implicit function approximator, the transformer may in-context execute some implicit algorithms learned from training data. However, it is still unverified whether the standard multi-layer transformer is exactly performing algorithms.


\textbf{AlgoFormer}. As shown in the green part of Figure \ref{fig:alg_structure}, vanilla looped transformers \citep{yang2023looped, giannou2023looped} admit the same structure as iterative algorithms. However, real applications are usually much more complicated. For example, given a task with data pairs, a well-trained researcher may first pre-process the data under some prior knowledge, and then formulate a mathematical (optimization) problem. Following that, some designed solvers, usually iterative algorithms, are performed. Finally, the desired results are obtained after further post-processing. 
The designed AlgoFormer (Algorithm Transformer), visualized in Figure \ref{fig:alg_structure}, enjoys the same structure as wide classes of algorithms. Specifically, we separate the transformer framework into three parts, i.e., pre-transformer $\text{TF}_{\text{pre}}$, looped transformer $\text{TF}_{\text{loop}}$, and post-transformer $\text{TF}_{\text{post}}$. Here, those three sub-transformers are standard multi-layer transformers in Equation (\ref{def_transformer}). Given the input token vectors $\bm{X}$ and the number of iteration steps $T$, the output admits:
\begin{equation}
\label{algoformer}    \text{TF}_{\text{post}}\underbrace{\left(\text{TF}_{\text{loop}}\left(\cdots \text{TF}_{\text{loop}}\right.\right.}_{T \text { iterations }}\left(\text{TF}_{\text{pre}}(\bm{X})\right)) \cdots).
\end{equation}
The looped transformer layers ($\text{TF}_{\text{loop}}$) share the same set of weights, as they perform identical computations during each iteration. In contrast, the pre-transformer ($\text{TF}_{\text{pre}}$) and post-transformer ($\text{TF}_{\text{post}}$) utilize distinct weights and architectures to handle their specific computational roles. 
The hyperparameters for each transformer module (e.g., number of heads, layers, and hidden dimensions) are configured based on prior knowledge of the computational complexity of the target algorithms.
Although we present AlgoFormer as in Equation (\ref{algoformer}), which consists of three modules, the key insight here is that the AlgoFormer can be designed flexibly based on the prior knowledge of the algorithm structure for the given task. Importantly, we do not restrict AlgoFormer to the specific form outlined in Equation (\ref{algoformer}).
Compared with standard transformers, the AlgoFormer acts more as the algorithm learner, by strictly regularizing the loop structure.
In contrast to Giannou et al. (2023) and Yang et al. (2024), which primarily focus on tasks solvable by iterative algorithms, our approach introduces additional transformer modules (e.g., pre- and post-transformers) for processing. These components are crucial for addressing the processing needs of real-world applications. This design makes the AlgoFormer capable of representing more complex algorithms and solving challenging tasks more efficiently. Additionally, one of the core insights of our work is that transformer architectures can be designed more efficiently, flexibly, and diversely by leveraging prior knowledge and the pre-defined structure of potential algorithms. This approach enables AlgoFormer to generalize across a broader range of applications while maintaining high efficiency and adaptability, such as representing algorithms involving nested loops, multiple loops, or multi-processing.


\subsection{Training Strategy} Our training strategy builds upon the methodology introduced in  \citet{yang2023looped}. Let $\bm{P}^{i}=[\bm{x}_{1}, f(\bm{x}_{1}),\cdots,\bm{x}_{i-1}, f(\bm{x}_{i-1}), \bm{x}_{i}]$ represents the input prompt for $1 \leq i \leq N$. We denote the AlgoFormer as $\text{TF}_{\text{Algo}}^{t}(\cdot;\bm{\Theta})$, where $f(\cdot)$ is a task-specific function that varies across different sequences and $t$ indicates the number of loops (iterations) in \Eqref{algoformer}, and $\bm{\Theta}$ represent the transformer parameters. Instead of evaluating the loss solely on $\text{TF}_{\text{Algo}}^{T}(\cdot;\bm{\Theta})$ with $T$ iterations, we minimize the expected loss over averaged iteration numbers:
\begin{equation}
\label{empirical_loss}
    \min_{\bm{\Theta}} \mathbb{E}_{\bm{P}} \left[\frac{1}{T - T_0} \sum_{t=T_0}^{T} \frac{1}{N} \sum_{i=1}^{N} \left\|\text{TF}_{\text{Algo}}^{t}(\bm{P}^{i};\bm{\Theta}) - f(\bm{x}_{i})\right\|_2^2\right],
\end{equation}
where $T_0 = \max\{T - \Delta T, 0\}$ is the initial step for evaluating performance, $T$ is the maximal loop iterations during training, and $\Delta T$ is the number of loop iterations included in the loss. To stabilize the training, we adopt the moving average over loop iterations from $T_0$ to $T$ in the loss. Incorporating the loss from $T_0$ to $T$ ensures that the transformer module faithfully applies iterative algorithms and generalizes beyond the number of training loops. This approach helps enhance the transformer's capability for tasks requiring a varying number of iterations. Experimental results in Sections \ref{sec: experiment_expressiveness} and \ref{sec: experiment_hyperparameters} further demonstrate that the transformer not only applies certain iterative algorithms but also generalizes well to longer loop iterations beyond those encountered during training.

\section{Expressive Power}
\label{sec:expressive_power}
In this section, we theoretically show by construction that AlgoFormer is capable of solving some challenging tasks, akin to human-designed algorithms. The core idea is as follows. Initially, the pre-transformer undertakes the crucial task of preprocessing the input data, such as representation transformation. The looped transformer is responsible for iterative algorithms in optimization problems. Finally, it is ready to output the desired result by the post-transformer. 
Through the analysis of AlgoFormer's expressive power in addressing these tasks, we expect its potential to make  contributions to the communities of scientific computing and machine learning. Throughout the section, we assume that the maximal number of data samples is $N$ and all sample observations (e.g., $\bm{x}_{i}$ and $\bm{y}_{i}$) are bounded.

\subsection{Regression with Representation}
We consider regression problems with representation, where the output behaves as a linear function of the input with a fixed representation function. Here, we adopt the $L$-layer MLPs with (leaky) ReLU activation function as the representation function $\Phi^{*}(\cdot)$. Specifically, we generate each in-context sample by first sampling the linear weight $\bm{A}$ from the prior $\mathcal{P}_{A}$, and then generating the input-label pair $\{(\bm{x}_i,\bm{y}_i)\}$ with $\bm{x}_i \in \mathbb{R}^{d} \sim \mathcal{P}_{x}$, $\bm{y}_i = \bm{A} \Phi^{*}(\bm{x}_i) + \bm{\epsilon}_{i}$ and $\bm{\epsilon}_i \sim \mathcal{N}\left(\bm{0}, \sigma^2 \bm{I} \right)$. We aim to find the test label $\bm{y}_{\text{test}}:=\bm{A} \Phi^{*}(\bm{x}_{\text{test}})$, given the in-context samples and test data $\{\bm{x}_{1}, \bm{y}_{1}, \cdots, \bm{x}_{N}, \bm{y}_{N}, \bm{x}_{\text{test}}\}$. Here, the weight matrix $A$ varies across different sequences of in-context samples but remains constant within a single sequence, and is learned in-context. The representation function $\Phi^{*}$, on the other hand, is fixed across all samples of sequences and is learned during training. A reliable solver is expected first to identify the representation function and transform the input data $\bm{x}$ to its representation $\Phi^{*}(\bm{x})$. Then it reduces to a regression problem, and some optimization algorithms are performed to find the weight matrix from in-context samples. Finally, it outputs the desired result $\bm{y}_{\text{test}}$ by applying transformations on the test data. We prove by construction that there exists an AlgoFormer that solves the task, akin to the human-designed reliable solver.

\begin{theorem}
\label{theorem_icl_representation}
    There exists a designed AlgoFormer with $\text{TF}_{\text{pre}}$ (an $(L+1)$-layer two-head transformer), $\text{TF}_{\text{loop}}$ (a one-layer two-head transformer), and $\text{TF}_{\text{post}}$ (a one-layer one-head transformer), that outputs $\bm{A} \Phi^{*}\left(\bm{x}_{\text{test}}\right)$ from the input-label pairs $\{\bm{x}_{1}, \bm{y}_{1}, \cdots, \bm{x}_{N}, \bm{y}_{N}, \bm{x}_{\text{test}}\}$ by fitting the representation function and applying gradient descent for multi-variate regression. The emulation of each step is not exact, as there is some error introduced in each step. However, the error can be made arbitrarily close to zero by increasing the temperature of the softmax and adjusting another free parameter, neither of which affects the size of the network.
\end{theorem}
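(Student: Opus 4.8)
The plan is to construct each of the three sub-transformers explicitly and verify that their composition realizes the human-designed solver described before the statement. Throughout, I would work with an augmented token matrix whose columns encode $\bm{x}_i$ and $\bm{y}_i$ together with blank coordinates reserved for intermediate quantities (the representation $\Phi^{*}(\bm{x}_i)$, a running estimate $\hat{\bm{A}}$ of the weight matrix, and scratch space), and I would set the indicator/positional entries so that the test column carrying $\bm{x}_{\text{test}}$ is distinguishable from the in-context columns.

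First I would handle the pre-transformer. Since $\Phi^{*}$ is an $L$-layer MLP and one MLP layer is an affine map followed by a (leaky) ReLU, I would implement each MLP layer inside the feed-forward block $\bm{W}_2\,\text{ReLU}(\bm{W}_1\cdot+\bm{b}_1)+\bm{b}_2$ of one transformer layer. The only subtlety is the leaky activation, which I would write as $\text{LeakyReLU}(z)=\text{ReLU}(z)-\alpha\,\text{ReLU}(-z)$ and absorb into a single width-doubled ReLU block, so each of the $L$ transformer layers reproduces exactly one MLP layer; the attention part is switched off by setting all value matrices to zero, so the residual connection makes attention the identity. The extra $(L{+}1)$-th layer copies the final representation into the designated feature coordinates and initializes the weight coordinates to zero, while the second head is reserved for routing the test column. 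This accounts for the stated $(L+1)$-layer two-head count.

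Second, I would show that $\text{TF}_{\text{loop}}$ implements one step of gradient descent for the multivariate least-squares objective $\tfrac12\sum_i\|\hat{\bm{A}}\Phi^{*}(\bm{x}_i)-\bm{y}_i\|^2$. Writing $\phi_i=\Phi^{*}(\bm{x}_i)$, the gradient with respect to $\hat{\bm{A}}$ is $\sum_i(\hat{\bm{A}}\phi_i-\bm{y}_i)\phi_i^\top$, which is a sum over tokens of an outer product — precisely the operation the attention mechanism computes. I would choose the key/query matrices so the softmax logits are (approximately) constant across in-context tokens, making softmax reduce to uniform averaging, and then rescale by the token count through the value matrix to recover the summation; the two heads let me assemble the terms $\hat{\bm{A}}\phi_i\phi_i^\top$ and $\bm{y}_i\phi_i^\top$ separately, and the residual connection performs the additive update $\hat{\bm{A}}\leftarrow\hat{\bm{A}}-\eta\sum_i(\hat{\bm{A}}\phi_i-\bm{y}_i)\phi_i^\top$. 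Iterating the shared block $T$ times is exactly $T$ gradient-descent steps, so for a suitably conditioned (strongly convex) quadratic $\hat{\bm{A}}$ converges to the least-squares minimizer, which recovers $\bm{A}$ in the appropriate limit.

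Finally, the post-transformer uses its single head to read the feature coordinates of the test column, apply the converged estimate stored in the weight coordinates, and write $\hat{\bm{A}}\Phi^{*}(\bm{x}_{\text{test}})$ into the output slot. I expect the main obstacle to be the loop step: reconciling the softmax nonlinearity with the linear summation needed for the gradient, and ensuring the matrix-valued (rather than vector-valued) update is carried consistently across the weight-shared loop while the stored representations and labels remain untouched. The pre-transformer is essentially bookkeeping once the leaky-ReLU-to-ReLU decomposition is in place, and the post-transformer is a direct read-out.
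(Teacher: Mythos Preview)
Your pre-transformer sketch is essentially fine and the post-transformer is indeed a read-out, but the loop step as you describe it does not work. The claim that uniform-softmax attention with two heads ``assembles the terms $\hat{\bm{A}}\phi_i\phi_i^\top$ and $\bm{y}_i\phi_i^\top$'' is the gap: for a head's output under uniform averaging to equal $\sum_i\text{vec}(\hat{\bm{A}}\phi_i\phi_i^\top)$, the value vector $\bm{W}_V\bm{p}_i$ would have to equal that flattened outer product, which is quadratic (in fact cubic, since $\hat{\bm{A}}$ is itself stored in $\bm{p}_i$) in the token's entries, whereas $\bm{W}_V$ is a fixed linear map. Attention does not compute outer products; it computes weighted sums of linearly extracted values. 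Storing the full matrix $\hat{\bm{A}}$ and trying to recover a matrix-valued gradient through attention runs directly into this obstruction, and you correctly flag it as the ``main obstacle'' without actually resolving it.

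The paper sidesteps the problem by never materializing $\hat{\bm{A}}$ or any outer product. What is stored at token $j$ is only the residual vector $\bm{A}_k\phi_j-\bm{y}_j$ (and at the test column, $\bm{A}_k\phi_{\text{test}}$). Key and query are chosen so that the pre-softmax logits are $\tfrac{c}{N}\phi_i^\top\phi_j$ together with a large additive constant $C$ on one dummy column; then $e^{C}\cdot\text{softmax}$ is approximately \emph{affine}, namely $1+\tfrac{c}{N}\phi_i^\top\phi_j$, rather than uniform. With value $\bm{A}_k\phi_i-\bm{y}_i$ (which \emph{is} a linear read of the token), the first head outputs $\sum_i\bigl(1+\tfrac{c}{N}\phi_i^\top\phi_j\bigr)(\bm{A}_k\phi_i-\bm{y}_i)$ at column $j$; the second head cancels the constant part, leaving $c\,\tfrac{\partial\mathcal L}{\partial\bm{A}}(\bm{A}_k)\,\phi_j$. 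The residual connection then updates $\bm{A}_k\phi_j-\bm{y}_j$ to $\bm{A}_{k+1}\phi_j-\bm{y}_j$. The inner product $\phi_i^\top\phi_j$ lives in the softmax logits, not in the values, so only vector-valued updates are ever needed.

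A smaller correction: since $\bm{x}_i$ and $\bm{y}_i$ occupy \emph{separate} columns in the prompt, the role of the $(L{+}1)$-th pre-transformer layer is not ``routing the test column'' but copying each $\bm{y}_i$ onto its partner $\phi_i$-column via quasi-orthogonal positional embeddings, then forming the initial residual $\bm{A}_0\phi_i-\bm{y}_i$ and the scaled feature $\tfrac{1}{N}\phi_i$. Correspondingly, the post-transformer just moves $\bm{A}_T\phi_{\text{test}}$ (already sitting in the test column) to the output slot; there is no stored $\hat{\bm{A}}$ to ``apply''.
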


\textbf{Remarks}. The detailed proof is available in Appendix \ref{appendix_proof_icl}. Our construction of the transformer framework involves three distinct sub-transformers, each assigned specific responsibilities. The pre-transformer, characterized by identity attention, is dedicated to representation transformation through feed-forward neural networks. This stage reduces the task to a multivariate regression problem. Subsequently, the looped transformer operates in-context to determine the optimal weight, effectively acting as an iterative solver. Finally, the post-transformer is responsible for the post-processing and generate the desired result $\bm{A} \Phi^{*}\left(\bm{x}_{\text{test}}\right)$. Here, the input prompt to the transformer is formulated as
$
    \bm{P} = \left[ \begin{array}{ccccccc}
        \bm{x}_{1} & \bm{0} & \cdots & \bm{x}_{N} & \bm{0} & \bm{x}_{\text{test}} \\
        \bm{0} & \bm{y}_{1} & \cdots & \bm{0} & \bm{y}_{N} & \bm{0} \\
        \bm{p}_{1}^{x} & \bm{p}_{1}^{y} & \cdots & \bm{p}_{N}^{x} &  \bm{p}_{N}^{y} & \bm{p}_{N+1}^{x}
    \end{array} \right],
$
where $\bm{p}_{i}^{x}$, and $\bm{p}_{i}^{y}$ denote positional embeddings and will be specified in the proof. Due to the differing dimensions of the input $\bm{x}$ and its corresponding label $\bm{y}$, zero padding is incorporated to reshape them into vectors of the same dimension. The structure of the prompt $\bm{P}$ aligns with similar formulations in previous works \citep{bai2023transformers, akyurek2023what, garg2022can}. For different input prompts $\bm{P}$, the hidden linear weights $\bm{A}$ are distinct but the representation function $\Phi^{*}(\cdot)$ is fixed. In comparison with the standard transformer adopted in \citet{guo2023transformers}, which investigates similar tasks, the designed AlgoFormer has a significantly lower parameter size, making it closer to the envisioned human-designed algorithm. Notably, we construct the looped transformer to perform gradient descent for the multi-variate regression. However, the transformer exhibits remarkable versatility, as it has the capability to apply (ridge) regularized regression and more effective optimization algorithms beyond gradient descent. For more details, please refer to Section \ref{sec:discussion}.

\subsection{AR(q) with Representation}
We consider the autoregressive model with representation. The dynamical (time series) system is generated by 
$    \bm{x}_{t+1} = \bm{A}\Phi^{*}\left([\bm{x}_{t+1-q}, \cdots, \bm{x}_{t}]\right) + \bm{\epsilon}_{t},$ 
where $\Phi^{*}\left(\cdot\right)$ is a fixed representation function (e.g., we take the L-layer MLPs), and the weight $\bm{A}$ varies from different sequences but remains constant within a single sequence. Our goal is to learn   $\Phi^{*}\left(\cdot\right)$ during training and to perform in-context learning of the weight matrix $\bm{A}$. In standard AR(q) (multivariate autoregressive) models, the representation function $\Phi^{*}\left(\cdot\right)$ is identity. Here, we investigate a more challenging situation in which the representation function is fixed but unknown. 
A well-behaved solver should first find the representation function and then translate it into a modified autoregressive model. With standard Gaussian priors on the white noise $\bm{\epsilon}_{t}$, the Bayesian estimator of the AR(q) model parameters admits $\arg \max_{\bm{A}} \prod_{t=1}^{N}f(\bm{x}_{t}|\bm{x}_{t-1}, \cdots, \bm{x}_{t-q}) = \arg \min_{\bm{A}} \sum_{t=1}^{N} \left\| \bm{x}_{t} - \bm{A}\Phi^{*}\left([\bm{x}_{t-q}, \cdots, \bm{x}_{t-1}]\right) \right\|_2^2$, where $f(\bm{x}_{t}|\bm{x}_{t-1}, \cdots, \bm{x}_{t-q})$ is the conditional density function of $\bm{x}_{t}$, given previous $q$ observations.
A practical solver initially identifies the representation function and transforms the input time series into its representation, denoted as $\Phi^{*}(\bm{x}_{t})$. Then the problem is reduced to an autoregressive form. Similar to the previous subsection, we prove by construction that there exists a AlgoFormer, akin to human-designed algorithms, capable of effectively solving the given task.

\begin{theorem}
\label{theorem_ar}
    There exists a designed AlgoFormer with $\text{TF}_{\text{pre}}$ (a one-layer $q$-head transformer with an $(L+1)$-layer one-head transformer), $\text{TF}_{\text{loop}}$ (a one-layer two-head transformer), and $\text{TF}_{\text{post}}$ (a one-layer one-head transformer), that predicts $\bm{x}_{N+1}$ from the data sequence $\{\bm{x}_1, \bm{x}_2, \cdots, \bm{x}_{N}\}$ by copying, transformation of the representation function and applying gradient descent for multi-variate regression. The emulation of each step is not exact, as there is some error introduced in each step. However, the error can be made arbitrarily close to zero by increasing the temperature of the softmax and adjusting another free parameter, neither of which affects the size of the network.
\end{theorem}

\textbf{Remarks}. The detailed proof can be found in Appendix \ref{appendix_proof_ar}. The technical details are similar to Theorem \ref{theorem_icl_representation}. 
The input prompt to the transformer is formulated as  
$
    \bm{P} = \left[ \begin{array}{ccccc}
        \bm{x}_{1} & \bm{x}_{2} & \cdots & \bm{x}_{N} \\
        \bm{p}_{1}^{x} & \bm{p}_{2}^{x} & \cdots & \bm{p}_{N}^{x} 
    \end{array} \right],
$
where $\bm{p}_{i}^{x}$ denote positional embeddings and will be specified in the proof.
Additionally, the pre-transformer copies the feature from the previous $q$ tokens, utilizing $q$ heads for parallel processing.

\subsection{Chain-of-Thought with MLPs}
Chain-of-Thought (CoT) demonstrates exceptional performances in mathematical reasoning and text generation \citep{wei2022chain}. The success of CoT has been theoretically explored, shedding light on its effectiveness in toy cases \citep{li2023dissecting} and on its computational complexity \citep{feng2023towards}. In this subsection, we revisit the intriguing toy examples of CoT generated by leaky ReLU MLPs, denoted as CoT with MLPs, as discussed in \citet{li2023dissecting}. 
We begin by constructing an L-layer MLP with leaky ReLU activation. For an initial data point $\bm{x} \sim \mathcal{P}_{x}$, the CoT point $\bm{s}^{\ell}$ represents the output of the $\ell$-th layer of the MLP. Consequently, the CoT sequence $\{\bm{x}, \bm{s}^{1}, \cdots, \bm{s}^{L}\}$ is exactly generated as the output of each (hidden) layer of the MLP. The implicit $L$-layer MLP remains the same within a single sequence but varies across different sequences, and it is learned in-context. The target of CoT with MLPs problem is to find the next state $\hat{\bm{s}}^{\ell+1}$ based on the CoT samples $\{\bm{x}_{1}, \bm{s}_{1}^{1}, \cdots, \bm{s}_{1}^{L}, \bm{x}_{2}, \cdots, \bm{x}_{N}, \bm{s}_{N}^{1},\cdots,\bm{s}_{N}^{L}, \bm{x}_{\text{test}}, \hat{\bm{s}}^{1}, \cdots, \hat{\bm{s}}^{\ell}\}$,
where $\{\hat{\bm{s}}^{1}, \cdots, \hat{\bm{s}}^{\ell}\}$ denotes the CoT prompting of $\bm{x}_{\text{test}}$. 
We establish by construction in Theorem \ref{theorem_cot_mlp} that the AlgoFormer adeptly solves the CoT with MLPs problem, exhibiting a capability akin to human-designed algorithms.

\begin{theorem}
\label{theorem_cot_mlp}
    There exists a designed AlgoFormer with $\text{TF}_{\text{pre}}$ (a seven-layer two-head transformer), $\text{TF}_{\text{loop}}$ (a one-layer two-head transformer), and $\text{TF}_{\text{post}}$ (a one-layer one-head transformer), that finds $\hat{\bm{s}}^{\ell+1}$ from samples $\{\bm{x}_{1}, \bm{s}_{1}^{1}, \cdots, \bm{s}_{1}^{L}, \bm{x}_{2}, \cdots,
    \bm{x}_{N}, \bm{s}_{N}^{1},\cdots,\bm{s}_{N}^{L}, \bm{x}_{\text{test}}, \hat{\bm{s}}^{1}, \cdots, \hat{\bm{s}}^{\ell}\}$ by filtering and applying gradient descent for multi-variate regression. The emulation of each step is not exact, as there is some error introduced in each step. However, the error can be made arbitrarily close to zero by increasing the temperature of the softmax and adjusting another free parameter, neither of which affects the size of the network.
\end{theorem}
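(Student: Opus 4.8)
The plan is to isolate the explicit algorithm an idealized solver would run on the CoT prompt and then realize each stage by one of the three sub-transformers, reusing as much of the machinery of \Cref{theorem_icl_representation,theorem_ar} as possible. The key structural observation is that the leaky ReLU is a bijection whose inverse is again a leaky ReLU (with reciprocal slope). Writing the task's underlying MLP layerwise as $\bm{s}^{\ell+1} = \sigma(\bm{W}^{\ell}\bm{s}^{\ell})$ with $\sigma$ the leaky ReLU, predicting $\hat{\bm{s}}^{\ell+1}$ from $\hat{\bm{s}}^{\ell}$ reduces to four steps: (i) determine the current depth $\ell$, i.e., how many CoT states have already been produced for $\bm{x}_{\text{test}}$; (ii) collect the $N$ in-context transition pairs $(\bm{s}_i^{\ell}, \bm{s}_i^{\ell+1})$ at that depth and apply $\sigma^{-1}$ to the targets, turning the problem into the linear system $\bm{W}^{\ell}\bm{s}_i^{\ell} = \sigma^{-1}(\bm{s}_i^{\ell+1})$; (iii) estimate $\bm{W}^{\ell}$ by regression; and (iv) output $\sigma(\bm{W}^{\ell}\hat{\bm{s}}^{\ell})$.

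First I would fix a prompt embedding in which each token stores its state vector together with a positional block encoding its token type (an $\bm{x}$, or an $\bm{s}^{j}$ for a specific $j$) and its example index, in the spirit of the prompt $\bm{P}$ of \Cref{theorem_icl_representation}. I would then use the two heads of the pre-transformer to, on the one hand, shift each state so that the token carrying $\bm{s}_i^{\ell}$ also sees its successor $\bm{s}_i^{\ell+1}$ (adjacent within the same example), and on the other hand broadcast the test-time depth $\ell$, read off from the positional block of the last token $\hat{\bm{s}}^{\ell}$. The feed-forward layers then perform the filtering, zeroing out every token whose depth does not match $\ell$ via a ReLU gate, and apply $\sigma^{-1}$ exactly through the identity $\sigma(z) = \mathrm{ReLU}(z) - \alpha\,\mathrm{ReLU}(-z)$, which a single ReLU block represents per coordinate; the seven layers are what this multi-step selection-and-inversion bookkeeping costs.

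Once the prompt has been reduced to the in-context pairs $(\bm{s}_i^{\ell}, \sigma^{-1}(\bm{s}_i^{\ell+1}))$ together with the query $\hat{\bm{s}}^{\ell}$, the remaining two stages are exactly a multivariate linear regression, so I would invoke the gradient-descent construction already established for \Cref{theorem_icl_representation}: the one-layer two-head looped transformer runs $T$ steps of gradient descent on $\sum_i \|\bm{W}\bm{s}_i^{\ell} - \sigma^{-1}(\bm{s}_i^{\ell+1})\|_2^2$, driving the stored weight toward $\bm{W}^{\ell}$. Finally the one-head post-transformer forms $\bm{W}^{\ell}\hat{\bm{s}}^{\ell}$ in its attention map and applies $\sigma$ through its feed-forward block to emit $\hat{\bm{s}}^{\ell+1}$.

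I expect the filtering step to be the main obstacle, and the reason the pre-transformer needs substantially more depth than in the earlier theorems. Unlike the plain regression setting, the relevant training pairs here depend on the test-time depth $\ell$, which is not fixed a priori and must be inferred from the length of the CoT suffix; the construction must therefore realize a position-dependent selection that is uniform over all admissible depths while simultaneously matching each $\bm{s}_i^{\ell}$ with its layer-$(\ell{+}1)$ successor across examples. Arranging the positional embeddings and the two attention heads so that ``depth equals $\ell$'' and ``shift by one layer'' are encoded cleanly, making the subsequent ReLU filter exact rather than approximate, is the delicate part; the leaky-ReLU inversion and the gradient-descent loop, by contrast, are essentially inherited from the representation-regression argument.
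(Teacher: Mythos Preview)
Your proposal is correct and follows essentially the same route as the paper: reduce the CoT task to a multivariate linear regression on the pairs $(\bm{s}_i^{\ell},\bm{s}_i^{\ell+1})$ via a filtering pre-transformer, exploit the invertibility of the leaky ReLU to linearize, run gradient descent in $\text{TF}_{\text{loop}}$ exactly as in \Cref{theorem_icl_representation}, and apply $\sigma$ in $\text{TF}_{\text{post}}$. The only notable difference is that the paper outsources the seven-layer filtering construction to Lemma~5 of \citet{li2023dissecting} rather than sketching it directly, whereas you outline the shift-and-broadcast mechanism yourself; both lead to the same filtered prompt and the same regression.
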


\textbf{Remarks}. We put the proof in Appendix \ref{appendix_proof_cot}. The pre-transformer first identifies the positional number $\ell$, and subsequently filters the input sequence into $\{\bm{s}_{1}^{\ell}, \bm{s}_{1}^{\ell+1}, \bm{s}_{2}^{\ell}, \bm{s}_{2}^{\ell+1}, \cdots, \bm{s}_{N}^{\ell}, \bm{s}_{N}^{\ell+1}, \hat{\bm{s}}^{\ell}\}$. This filtering transformation reduces the problem to a multi-variate regression problem. Compared with \citet{li2023dissecting}, where an assumption is made, we elaborate on the role of looped transformers in implementing gradient descent. While the CoT with MLPs may not be explicitly equivalent to CoT tasks in real applications, Theorem \ref{theorem_cot_mlp} somewhat implies the potential of the AlgoFormer in solving CoT-related problems.

\section{Extensions and Further Analysis}
\label{sec:discussion}
In this section, we provide complementary insights to the results discussed in Section \ref{sec:expressive_power}. Firstly, as discussed in the remark following Theorem \ref{theorem_icl_representation}, we construct the looped transformer that employs gradient descent to solve (regularized) multi-variate regression problems. However, in practical scenarios, the adoption of more efficient optimization algorithms is often preferred. Investigating the expressive power of transformers beyond gradient descent is both intriguing and appealing. As stated in Theorem \ref{theorem_newton}, we demonstrate that the AlgoFormer can proficiently implement Newton's method for solving linear regression problems. Secondly, the definition in \Eqref{def_transformer} implies the encoder-based transformer. In practical applications, a decoder-based transformer with causal attention, as seen in models like GPT-2 \citep{radford2019language}, may also be favored. For completeness, it is also compelling to examine the behavior of decoder-based transformers in algorithmic learning. Our findings, presented in Theorem \ref{theorem_decoder}, reveal that the decoder-based AlgoFormer can also implement gradient descent in linear regression problems. The primary distinction lies in the fact that the decoder-based transformer utilizes previously observed data to evaluate the gradient, while the encoder-based transformer calculates the gradient based on the full data samples.

\subsection{Beyond the Gradient Descent}

Newton's (second-order) methods enjoy superlinear convergence under some mild conditions, outperforming gradient descent with linear convergence. This raises a natural question:

\textit{Can the transformer implement algorithms beyond gradient descent, including higher-order optimization algorithms?} 

In this section, we address this question by demonstrating that the designed AlgoFormer can also realize Newton's method in regression problems.
Consider the linear regression problem given by:
\vskip -1em
\begin{equation}
\label{linear_regression}
    \arg \min_{\bm{w}} \frac{1}{2N} \sum_{i=1}^{N} \left( \bm{w}^{\top}\bm{x}_{i} - y_i \right)^2.
\end{equation}
\vskip -1em
Denote $\bm{X} =\left[ \bm{x}_{1}, \bm{x}_{2}, \cdots, \bm{x}_{N}\right]^{\top} \in \mathbb{R}^{N \times d}$, $\bm{y} = \left[ y_1, y_2, \cdots, y_{N}\right]^{\top} \in \mathbb{R}^{N \times 1}$ and $\bm{S} = \bm{X}^{\top} \bm{X}$.
A typical Newton's method for linear regression problems follows the update scheme:
\begin{equation}
\label{newton_method}
\bm{M}_0=\alpha \bm{S},~\text{where}~\alpha \in \left(0,\frac{2}{\left\|\bm{S} \bm{S}^{\top}\right\|_2}\right], 
\hspace{0.5em}
\bm{M}_{k+1}=2 \bm{M}_k-\bm{M}_k \bm{S} \bm{M}_k,
\hspace{0.5em}
\bm{w}_{k}^{\text{Newton}}=\bm{M}_{k} \bm{X}^{\top} \bm{y}.
\end{equation}
\vskip -1em
As described in \citet{soderstrom1974numerical, pan1991improved}, the above update scheme (Newton's method) enjoys superlinear convergence, in contrast to the linear convergence of gradient descent. The following theorem states that Newton's method in \Eqref{newton_method} can be realized by the AlgoFormer.

\begin{theorem}
\label{theorem_newton}
    There exists a designed AlgoFormer with $\text{TF}_{\text{pre}}$ (a one-layer two-head transformer), $\text{TF}_{\text{loop}}$ (a one-layer two-head transformer), and $\text{TF}_{\text{post}}$ (a two-layer two-head transformer), that implements Newton's method described by \Eqref{newton_method} in solving regression problems. The emulation of each step is not exact, as there is some error introduced in each step. However, the error can be made arbitrarily close to zero by increasing the temperature of the softmax and adjusting another free parameter, neither of which affects the size of the network.
\end{theorem}

\textbf{Remarks}. The proof can be found in Appendix \ref{appendix_proof_newton}. The pre-transformer performs preparative tasks, such as copying from neighboring tokens. The looped-transformer is responsible for updating and calculating $\bm{M}_{k} \bm{x}_{i}$ for each token $\bm{x}_{i}$ at every step $k$. The post-transformer compute the final estimated weight $\bm{w}_{T}^{\text {Newton }}$ and outputs the desired the results $\bm{w}_{T}^{\text {Newton} \top} \bm{x}_{\text{test}}$, where $T$ is the iteration number in \Eqref{algoformer} and \Eqref{newton_method}.  In a related study by \citet{fu2023transformers}, similar topics are explored, indicating that transformers exactly perform higher-order optimization algorithms. 
However, our transformer architectures differ, and technical details are distinct.

\subsection{Decoder-based Transformer}
In the preceding analysis, the encoder-based AlgoFormer (with full attention) demonstrates its capability to solve problems by performing algorithms. Previous studies \citep{giannou2023looped, bai2023transformers,zhang2023trained, huang2023context, ahn2023transformers} also focus on the encoder-based models. We opted for an encoder-based transformer because full-batch data is available for estimating gradient and Hessian information. However, in practical applications, decoder-based models, like GPT-2, are sometimes more prevalent. In this subsection, we delve into the performance of the decoder-based model when executing iterative optimization algorithms, such as gradient descent, to solve regression problems. 

We consider the linear regression problem in \Eqref{linear_regression}. 
Due to the limitations of the decoder-based transformer, which can only access previous tokens, implementing iterative algorithms based on the entire batch data is not feasible. However, it is important to note that the current token in a decoder-based transformer can access data from all previous tokens. To predict the label $y_{i}$ based on the input prompt $\bm{P}^{i} = \left[ \bm{x}_{1}, y_1, \cdots, \bm{x}_{i} \right]$, the empirical loss for the linear weight at $\bm{x}_{i}$ is given by
\begin{equation}
\label{decoder_linear_regression}
    \bm{w}^{i} \in \arg \min_{\bm{w}} \mathcal{L}\left( \bm{w}; \bm{P}^{i} \right) := \frac{1}{2(i-1)} \sum_{j=1}^{i-1}  \left( \bm{w}^{\top}\bm{x}_{j} - y_j \right)^2.
\end{equation}
\vskip -1em
In essence, the linear weight is estimated using accessible data from the previous tokens, reflecting the restricted information available in the decoder-based transformer. 

\begin{theorem}
\label{theorem_decoder}
    There exists a designed AlgoFormer with $\text{TF}_{\text{pre}}$ (a one-layer two-head transformer), $\text{TF}_{\text{loop}}$ (a one-layer two-head transformer), and $\text{TF}_{\text{post}}$ (a two-layer two-head transformer), that outputs $\bm{w}^{i \top}_{T} \bm{x}_{i}$ for each input data $\bm{x}_{i}$, where $\bm{w}_{T}^{i}$ comes from $\arg \min_{\bm{w}} \mathcal{L}\left( \bm{w}; \bm{P}^{i} \right)$ after $T$ steps of gradient descent. The emulation of each step is not exact, as there is some error introduced in each step. However, the error can be made arbitrarily close to zero by increasing the temperature of the softmax and adjusting another free parameter, neither of which affects the size of the network.
\end{theorem}

\textbf{Remarks}. The detailed proof is available in Appendix \ref{appendix_proof_decoder}. The technical details closely resemble those in Theorem \ref{theorem_icl_representation}, with the key distinction being that the decoder-based transformer can solely leverage data from previous tokens to determine the corresponding weight $\bm{w}^{i}$. Our findings align with those in \citet{guo2023transformers}, although our transformer architectures and attention mechanisms differ. In a related study by \citet{akyurek2023what}, similar topics are explored, demonstrating that the decoder-based transformer performs single-sample stochastic gradient descent, while our results exhibit greater strength with $\bm{w}^{i} \in \arg \min_{\bm{w}} \mathcal{L}\left( \bm{w}; \bm{P}^{i} \right)$. The construction of a decoder-based transformer for representing Newton's method is more challenging. We left it as a potential topic for future investigation.

\section{Experiments}
\label{sec:experiment}

\begin{figure*}[t!]

  \centering

  \subfloat[Regression with representation] {
     \label{fig:icl_representation}
    \includegraphics[width=0.3\textwidth]{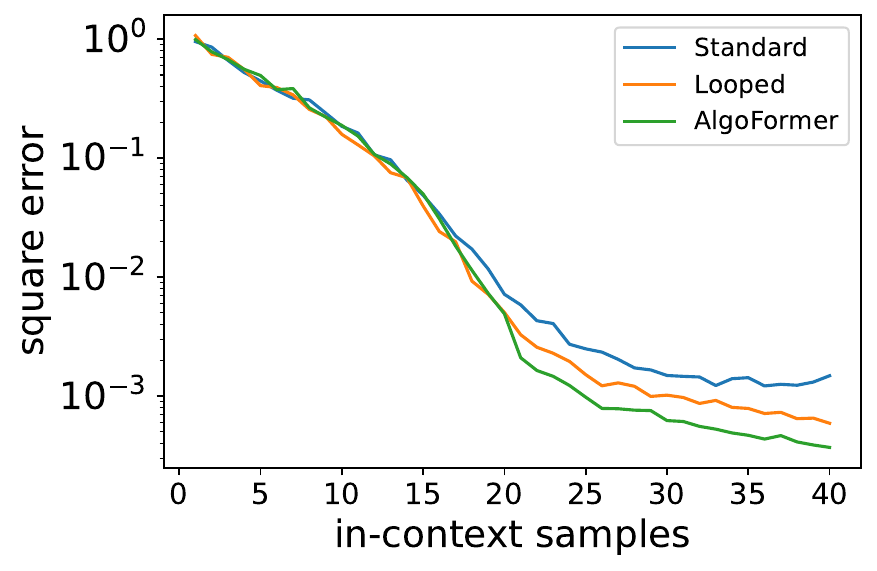}
    }\
    \subfloat[AR(q) with representation]{
    \label{fig:ar}
    \includegraphics[width=0.3\textwidth]{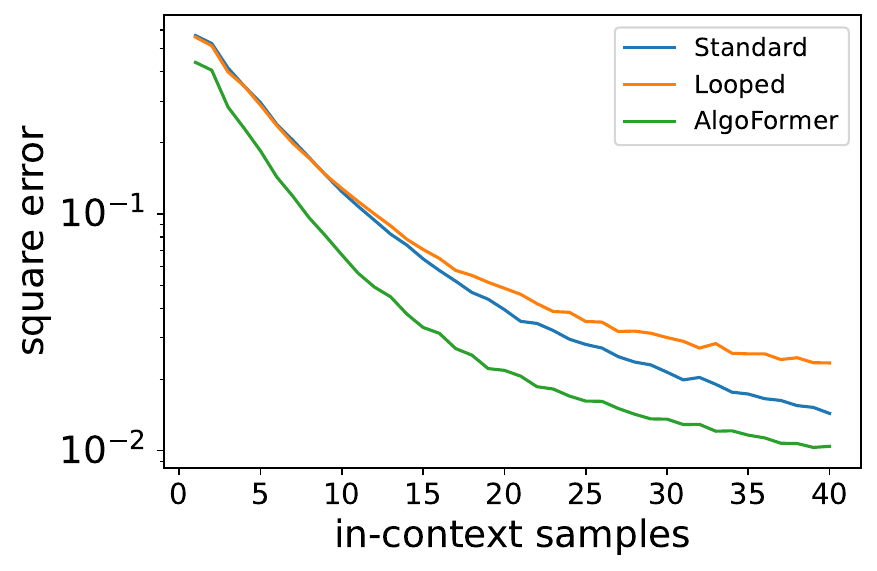}
    }\
    \subfloat[CoT with MLPs]{
    \label{fig:cot}
    \includegraphics[width=0.3\textwidth]{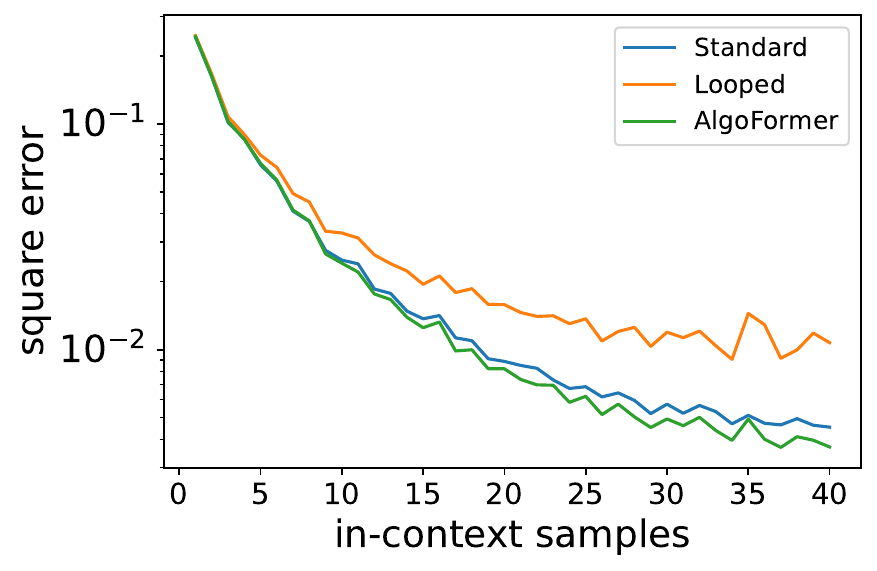}
    }\
    \caption{The validation error of trained models (the standard transformer, the vanilla looped transformer, and the AlgoFormer), assessed on regression with representation, AR(q) with representation, and CoT with MLPs tasks. By choosing suitable hyperparameters (i.e., we set $(T,\Delta T)=(20,15)$), the AlgoFormer has significantly better performance than the standard transformer and the vanilla looped transformer on those tasks.}
    \label{fig:tasks}
    \vskip -1.5em
\end{figure*}

In this section, we conduct a comprehensive empirical evaluation of the performance of AlgoFormer in tackling challenging tasks, specifically addressing regression with representation, AR(q) with representation, and CoT with MLPs, as outlined in Section \ref{sec:motivation}. 
Additionally, we also implement AlgoFormer on the neural machine translation of German and English and AG News classification, demonstrating its expressiveness and effectiveness in real-world language tasks.

\subsection{Experimental Settings and Hyperparameters}
\label{appendix:experimental_setting}

\textbf{Experimental settings}. In all experiments, we adopt the decoder-based AlgoFormer, standard transformer (GPT-2), and vanilla looped transformer \cite{yang2023looped}. 
For synthetic tasks, we utilize $N=40$ in-context samples as input prompts and $d=20$ dimensional vectors with $D=256$ dimensional positional embeddings for all experiments. To ensure fairness in comparisons, all models are trained using the Adam optimizer, with a learning rate $\eta = 1e-4$ and a total of 500K iterations to ensure convergence. The standard transformer is designed to have $L=12$ layers while pre-, looped and post-transformers are all implemented in one-layer. The default setting for the AlgoFormer, as well as the vanilla looped transformer, involves setting $(T, \Delta T) = (20, 15)$.
Here, the prompt formulation and the training loss in Equation (\ref{empirical_loss}) may slightly differ for different tasks. For example, in the AR(q) task, the input prompt is reformulated as $\bm{P}^{i}=[\bm{x}_{1}, \cdots,\bm{x}_{i-1}, \bm{x}_{i}]$ and $\bm{x}_{i+1} = f([\bm{x}_{i+1-q}, \cdots, \bm{x}_{i}])$ is the target for prediction. However, the training strategy can be easily transmitted to other tasks. Here, both the iteration numbers $T_0$ and $T$ are hyperparameters, which will be analyzed in the next subsection.

\textbf{Regression with representation}. In this task, we instantiate a $3$-layer leaky ReLU MLPs, denoted as $\Phi^{*}(\cdot)$, which remains fixed across all tasks. The data generation process involves sampling a weight matrix $\bm{A} \in \mathbb{R}^{1 \times 20}$. Subsequently, input-label pairs $\{(\bm{x}_{i}, y_i)\}_{i=1}^{40}$ are generated, where $\bm{x}_{i} \sim \mathcal{N}\left(\bm{0}, \bm{I}_{20} \right)$, $\epsilon_i \sim \mathcal{N}\left(0,\sigma^2 \right)$ and $y_i = \bm{A}\Phi^{*}(\bm{x}_{i}) + \epsilon_i$. In Figure \ref{fig:icl_representation}, we specifically set $\sigma = 0$. Additionally, we explore the impact of different noise levels by considering $\sigma=0.1$ and $\sigma=1$. Here, our target is to learn the representation function $\Phi^{*}(\cdot)$ during training and to perform in-context learning of the weight matrix $\bm{A}$.

\textbf{AR(q) with representation}. For this task, we set $q=3$ and employ a $3$-layer leaky ReLU MLP denoted as $\Phi^{*}(\cdot)$, consistent across all instances. The representation function accepts a $60$-dimensional vector as input and produces $20$-dimensional feature vectors. The time series sequence $\{\bm{x}_{t}\}_{t=1}^{N}$ is generated by initially sampling  $\bm{A} \sim \mathcal{N}\left( \bm{0}, \bm{I}_{20 \times 20}\right)$. Then the sequence is auto-regressively determined, with $\bm{x}_{t+1} = \bm{A}\Phi^{*}\left([\bm{x}_{t+1-q}, \cdots, \bm{x}_{t}]\right) + \bm{\epsilon}_{t}$, where $\bm{\epsilon}_{t} \sim \mathcal{N}\left(\bm{0}, \bm{I}_{20} \right)$. Here, our target is to learn the representation function $\Phi^{*}(\cdot)$ during training and to perform in-context learning of the weight matrix $\bm{A}$.

\textbf{CoT with MLPs}. In this example, we generate a 6-layer leaky ReLU MLP to serve as a CoT sequence generator, determining the length of CoT steps for each sample to be six. The CoT sequence, denoted as $\{\bm{x}, \bm{s}^{1}, \cdots, \bm{s}^{L}\}$ is generated by first sampling $\bm{x} \sim \mathcal{N}\left( \bm{0}, \bm{I}_{20}\right)$, where $\bm{s}^{\ell} \in \mathbb{R}^{20}$ represents the intermediate state output from the $\ell$-th layer of the MLP. Here, our target is to perform in-context learning of the generator function (i.e., the 6-layer leaky ReLU MLP).

\subsection{AlgoFormer Exhibits High Expressiveness}
\label{sec: experiment_expressiveness}
In this subsection, we conduct a comparative analysis of the AlgoFormer against the standard and vanilla looped transformers across challenging tasks, as outlined in Section \ref{sec:expressive_power}. Figure \ref{fig:tasks} illustrates the validation error trends, showcasing a decrease with an increasing number of in-context samples, aligning with our intuition. Crucially, the AlgoFormer consistently outperforms both the standard and the vanilla looped transformer across all tasks, highlighting its superior expressiveness in algorithm learning. 
Particularly in the CoT with MLPs task, both the AlgoFormer and the standard transformer significantly surpass the vanilla looped transformer. This further underscores the significance of preprocessing and postprocessing steps in handling complex real-world applications. The carefully designed algorithmic structure of the AlgoFormer emerges as an effective means of structural regularization, contributing to enhanced algorithm learning capabilities.

\subsection{Impact of Hyperparameters}
\label{sec: experiment_hyperparameters}
In this subsection, we conduct the empirical analysis of the impact of the hyperparameters on the AlgoFormer.

\begin{figure*}[ht]

  \centering
  \subfloat[$\Delta T = 5$] {
     \label{fig:T_1}
    \includegraphics[width=0.3\textwidth]{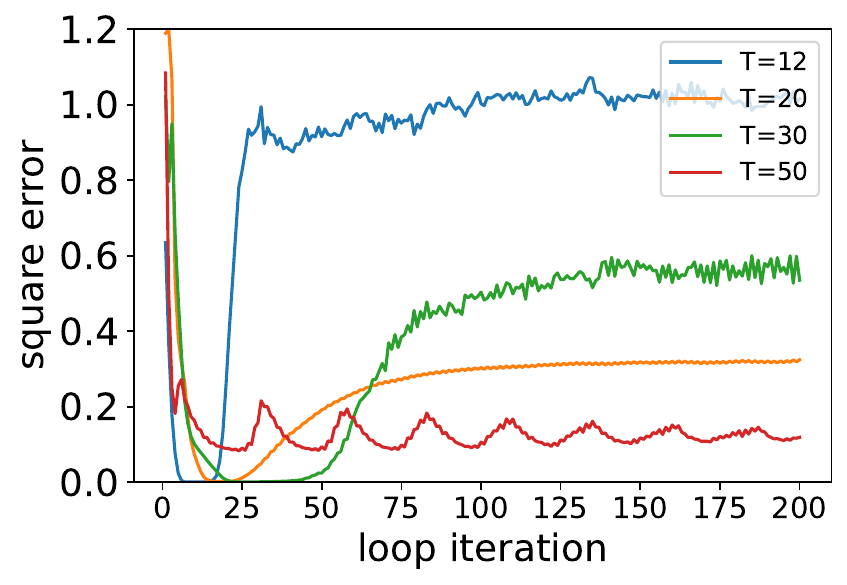}
    }\
    \subfloat[$\Delta T = 10$]{
    \label{fig:T_2}
    \includegraphics[width=0.3\textwidth]{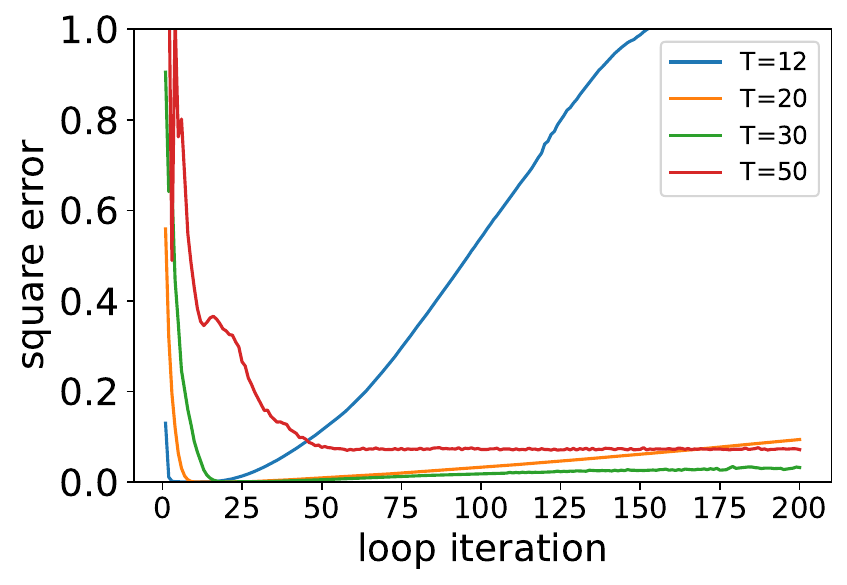}
    }\
    \subfloat[$\Delta T = 15$]{
    \label{fig:T_3}
    \includegraphics[width=0.3\textwidth]{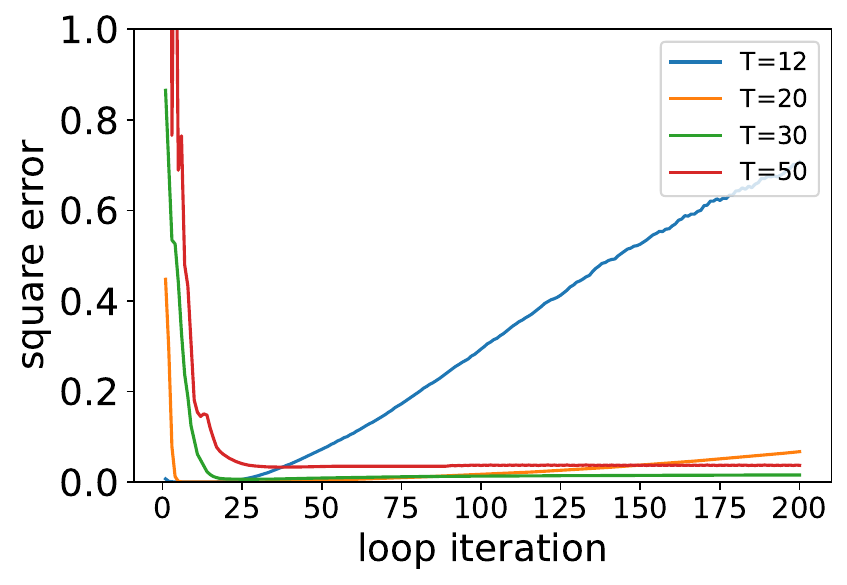}
    }\
\caption{
The validation error of trained models, evaluated on regression with representation task, with varying hyperparameters $T$ and $\Delta T$. The AlgoFormers are trained for $T$ loops, defined in \Eqref{empirical_loss}, and the evaluation focuses on square loss at longer iterations, where the number of loop iterations far exceeds $T$.}
  \label{fig:T}
  \vskip -2em
\end{figure*}

\begin{figure*}[ht]

  \centering
  \subfloat[Varying numbers of layers] {
     \label{fig:head_layer_1}
    \includegraphics[width=0.3\textwidth]{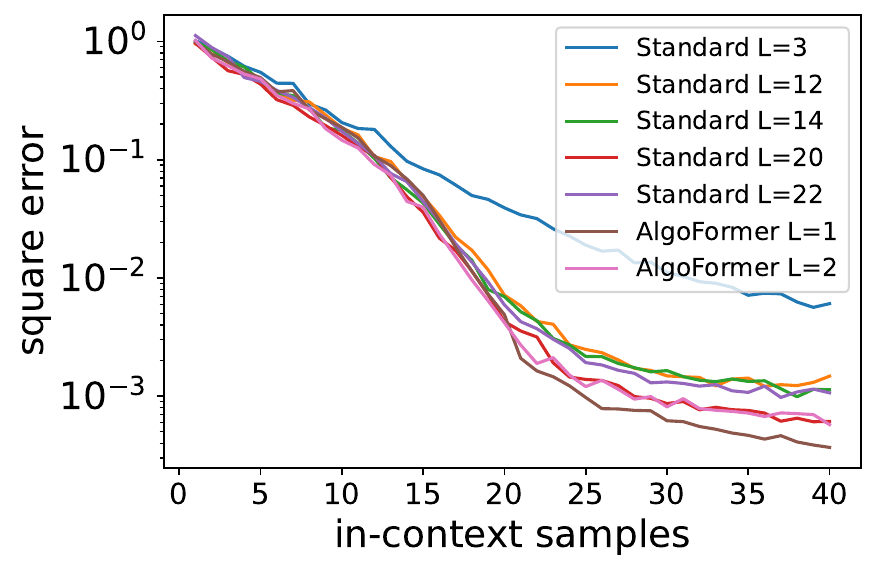}
    }\
    \subfloat[Varying numbers of heads]{
    \label{fig:head_layer_2}
    \includegraphics[width=0.3\textwidth]{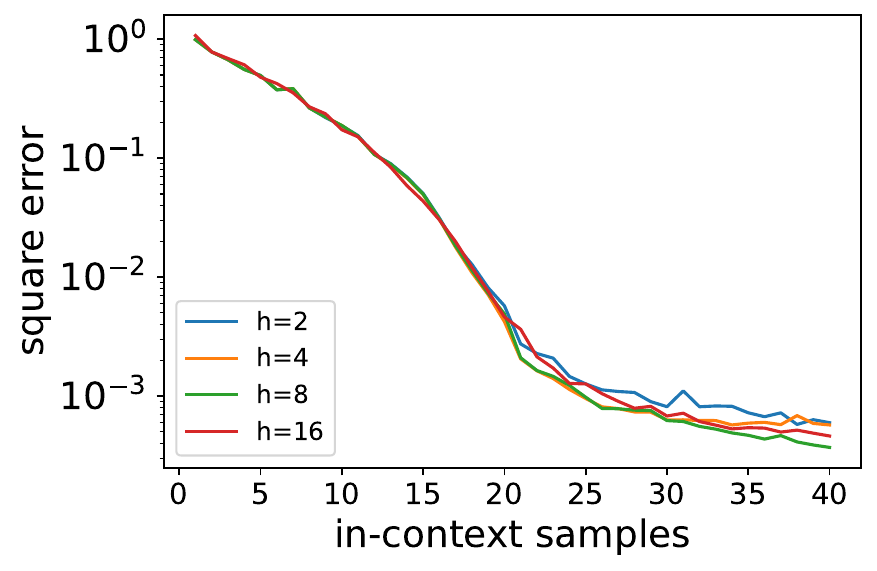}
    }\
\caption{
The validation error of trained models, evaluated on regression with representation task, with varying numbers of layers (denoted as $L$) and heads (denoted as $h$). In the context of AlgoFormer, the number of layers $L$ corresponds to the layers in the pre-, looped, and post-transformers, all of which are $L$-layer transformers. The AlgoFormers are trained with $(T, \Delta T)=(20,15)$, defined in \Eqref{empirical_loss}.}
  \label{fig:head_layer}
  \vskip -2em
\end{figure*}

\begin{figure*}[ht]

  \centering
  \subfloat[Linear regression, $\sigma=0$] {
     \label{fig:lr_newton_1}
    \includegraphics[width=0.3\textwidth]{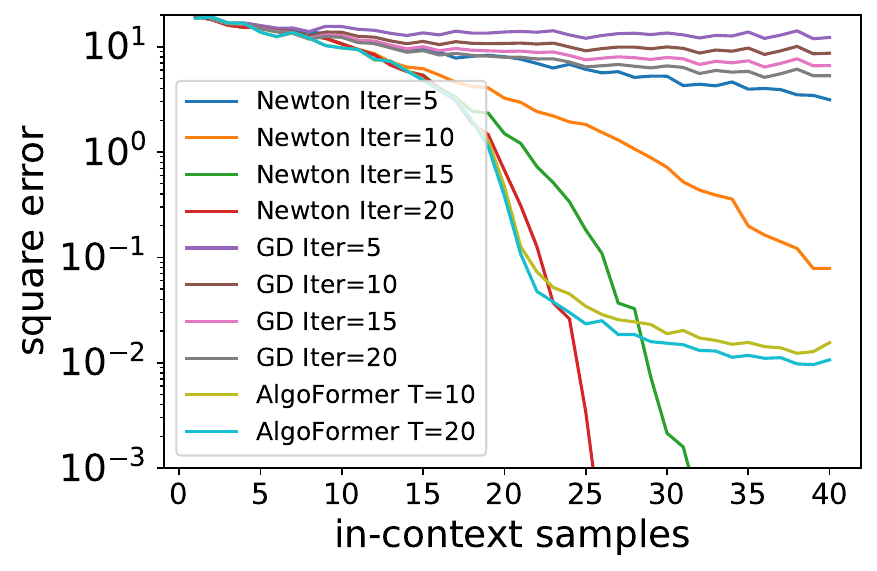}
    }\
    \subfloat[Linear regression, $\sigma=0.1$]{
    \label{fig:lr_newton_2}
    \includegraphics[width=0.3\textwidth]{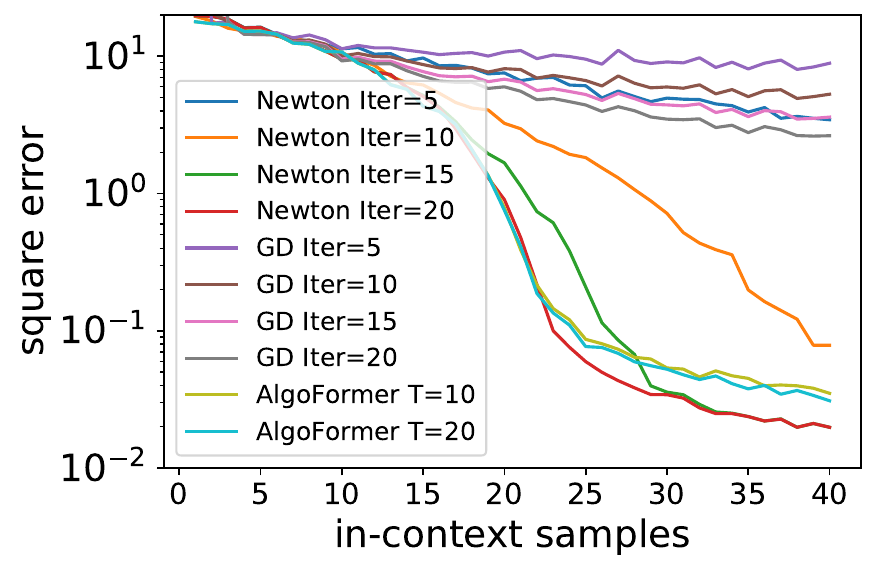}
    }\
    \subfloat[Linear regression, $\sigma=1.0$]{
    \label{fig:lr_newton_3}
    \includegraphics[width=0.3\textwidth]{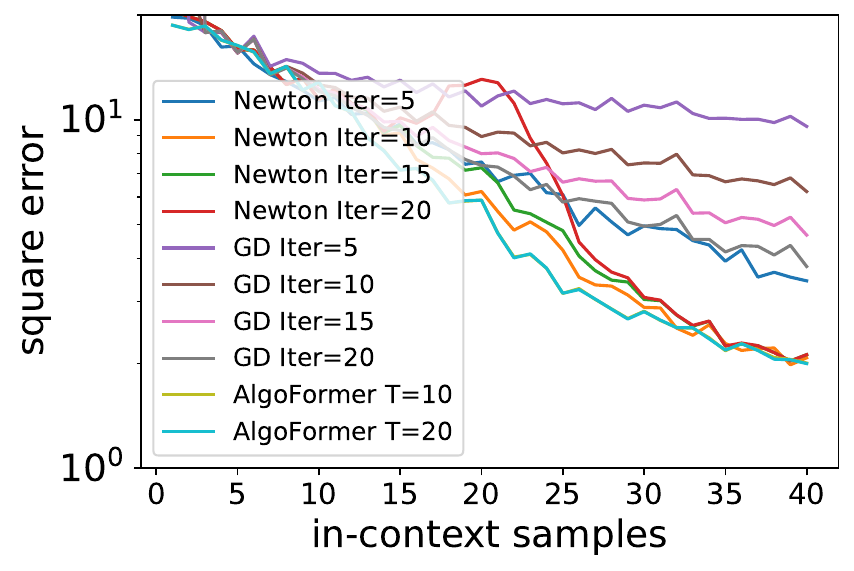}
    }\
\caption{
The validation error of trained AlgoFormer models and the linear regression models optimized by gradient descent and Newton's method. The AlgoFormers are trained with $(T, \Delta T)=(20,15)$ and $(T, \Delta T)=(10,10)$, defined in \Eqref{empirical_loss}.}
  \label{fig:lr_newton}
  \vskip -1em
\end{figure*}

\textbf{Loop iterations}. We conduct comprehensive experiments on the AlgoFormer with varying loop numbers, on solving the regression with representation task. The results highlight the crucial role of both $T$ and $\Delta T$ in the performance of the AlgoFormer. It is observed that a larger $\Delta T$ contributes to the stable inference of transformers. Comparing Figure \ref{fig:T_1} with Figures \ref{fig:T_2} and \ref{fig:T_3}, it is evident that a larger $\Delta T$ enhances stable long-term inference. The number of loop iterations $T$ determines the model capacity in expressing algorithms. However, it is important to note that there exists a trade-off between the iteration numbers $(T, \Delta T)$ and computational costs. Larger $(T, \Delta T)$ certainly increases model capacity but also leads to higher computational costs and challenges in model training, as reflected in Figure \ref{fig:T}.

\textbf{Number of heads and layers}. In our experiments on the AlgoFormer, we vary the numbers of heads and layers in $\text{TF}_{\text{loop}}$ while addressing the regression with representation task.
The results reveal a consistent trend that an increase in both the number of heads and layers leads to lower errors. This aligns with our intuitive understanding, as transformers with greater numbers of heads and layers exhibit enhanced expressiveness. However, a noteworthy observation is that 4-layer and 16-head transformers may exhibit suboptimal performance, possibly due to increased optimization challenges during model training. This finding underscores the importance of carefully selecting the model size, as a larger model, while offering higher expressiveness, may present additional training difficulties. The visualized results are shown in Figure \ref{fig:head_layer}. Moreover, compared with the standard transformer, even with the same number of layers, the AlgoFormer exhibits better performance in those tasks, mainly due to the introduced algorithmic structure. This finding highlights the role of the structure regularization of the model. Therefore, we have reasons to believe that the good performance of the AlgoFormer not only comes from the higher expressiveness with deeper layers but also from the structure regularization of model architecture, which facilitates easier training and good generalization.

In Figure \ref{fig:T}, we also provide insights into the behavior of AlgoFormer during inference. Although the AlgoFormer is trained with T=20 loop iterations, the inference is conducted with significantly longer loop iterations (e.g., T=200),
As T increases beyond the training length, AlgoFormer demonstrates improved performance and eventually stabilizes (i.e., converges). This behavior suggests that the inner looped transformers are effectively performing iterative computations that converge to a stable solution.

\subsection{AlgoFormer and Human-Designed Algorithms}

In this subsection, we compare the AlgoFormer with Newton's method and gradient descent in solving linear regression problems. We adopt the same default hyperparameters, with their selection grounded in a comprehensive grid search.

As illustrated in Figure \ref{fig:lr_newton}, we observe that in the noiseless case, the AlgoFormer outperforms both Newton's method and gradient descent in the beginning stages. However, Newton's method suddenly achieves nearly zero loss ( machine precision) later on, benefiting from its superlinear convergence. In contrast, our method maintains an error level around $1e-3$. With increasing noise levels, both Newton's method and gradient descent converge slowly, while our method exhibits better performance.

Several aspects contribute to this phenomenon. Firstly, in the noiseless case, Newton's method can precisely recover the weights through the linear regression objective in \Eqref{decoder_linear_regression}, capitalizing on its superlinear convergence.
On the other hand, the AlgoFormer operates as a black-box, trained from finite data. While we demonstrate good model expressiveness, the final generalization error of the trained transformer results from the model's expressiveness, the finite number of training samples, and the optimization error. 
Despite exhibiting high expressiveness, the trained AlgoFormer cannot eliminate the last two errors entirely. This observation resonates with similar findings in solving partial differential equations \citep{raissi2019physics} and large linear systems \citep{gu2023deep} using deep learning models.

Secondly, with larger noise levels, Newton's method shows suboptimal results. This is partly due to the inclusion of noise, which slows down the convergence rate, and Newton's method experiences convergence challenges when moving away from the local solution. In terms of global convergence, the AlgoFormer demonstrates superior performance compared to Newton's method.

Human-designed algorithms, backed by problem priors and precise computation, achieve irreplaceable performance. It's important to note that deep learning models, including transformers, are specifically designed for solving black-box tasks where there is limited prior knowledge but sufficient observation samples. We expect that transformers, with their substantial expressiveness, hold the potential to contribute to designing implicit algorithms in solving scientific computing tasks.

\subsection{Applications to Language Tasks}

In this subsection, we extend the evaluation of the proposed AlgoFormer to real-world language tasks, complementing its performance in real applications. Specifically, we focus on Neural Machine Translation using the IWSLT 2015 German-English dataset. The experimental setup includes a standard Transformer with 12 layers, 8 attention heads, a feature dimension of 256, and a learning rate of 5e-5. The pre-, looped, and post-transformers are all implemented as single layers, with $T$ set to 10 and $\Delta T$ to 10 (see the hyperparameters in \Eqref{empirical_loss} for training). 
The input German text is treated as a prefix, and the output English text is generated autoregressively using decoder-based transformers for all three models.
The translation performance is evaluated using cross-entropy loss, where lower values indicate better results. 
Additionally, BLEU (Bilingual Evaluation Understudy) is a widely used metric for measuring the quality of translation tasks, with higher scores indicating better performance.
The results are presented in the table below:

\begin{table}[h!]
\centering
\begin{tabular}{|c|c|c|c|}
\hline
\textbf{Model} & \textbf{Standard} & \textbf{Looped} & \textbf{AlgoFormer} \\
\hline
\textbf{Cross entropy} & 4.99 & 4.73 & \textbf{4.61} \\
\textbf{BLUE} & 9.30
 & 10.56
 & \textbf{14.72} \\
\hline
\end{tabular}
\caption{Quality of Transformer models on machine translation.}
\vskip -1em
\end{table}

We also implement the proposed AlgoFormer on the text classification task using various datasets (AG News, IMDB, DBPedia, Yelp Review, and Yahoo News) to evaluate its performance on a real-world language application. The experimental setup includes a standard Transformer with 12 layers, 1 attention head, a feature dimension of 32, and a learning rate of 1e-3. The pre-, looped, and post-transformers are all implemented as single layers with one attention head, with $(T,\Delta T)=(10,10)$. The input news text data is processed using encoder-based transformers, and the output classification is generated. Classification accuracy, where higher accuracy indicates better performance, is used as the evaluation metric. The results are summarized in the table below:
\begin{table}[h!]
\centering
\begin{tabular}{|c|c|c|c|}
\hline
\textbf{Model} & \textbf{Standard} & \textbf{Looped} & \textbf{AlgoFormer} \\
\hline
\textbf{AG News} &  92.07
 & 91.04
 & \textbf{97.92} \\
 \textbf{IMDB} & 75.00 & \textbf{77.50} & \textbf{77.50}\\
\textbf{DBPedia} & \textbf{99.21} & 99.11 & 98.21\\
\textbf{Yelp Review} & \textbf{66.25} & 53.57 & 57.50\\
\textbf{Yahoo News} & 73.96 & 69.79 & \textbf{81.25
}\\
\hline
\end{tabular}
\caption{Accuracy (\%) of Transformer models on text classification across different datasets.}
\vskip -1em
\end{table}

As shown in the results, AlgoFormer outperforms and performs comparably with the standard Transformer and the vanilla looped Transformer in the Neural Machine Translation and Text Classification tasks, suggesting that conventional models may have inherent redundancies. This aligns with recent findings on the redundancy in large language models \citep{chen2024compressing,frantar2023sparsegpt,xia2024sheared}. These preliminary results indicate the potential of AlgoFormer, which is designed as an algorithmic conductor, in real-world language tasks.

\subsection{Computation Comparison}
The additional computational costs and complexity introduced by AlgoFormer over the standard transformer and the vanilla looped transformer depend on the specific structure of the AlgoFormer framework. In our experimental settings, these costs are negligible. Compared to the standard transformer, the total computation overhead is primarily determined by the loop iterations $T$ and the number of loops $\Delta T$ included in the training loss. Since $T$ in our experiments is comparable to the number of layers in the standard transformer, the dominant additional computation comes from $\Delta T$. This overhead can be mitigated by using smaller batch sizes, ensuring that the proposed method does not introduce significant computational overhead compared to the standard transformer. The main complexity lies in the training strategy outlined in Equation \ref{empirical_loss}, which ensures stable training.

When compared to the vanilla looped transformer, the computational overhead comes from the pre- and post-transformer modules. However, if these modules are of similar size to the looped transformer, the additional computational cost is negligible, especially when the loop iteration $T$ is much greater than the number of layers in these modules. For instance, in our experiments, we set the pre-, looped-, and post-transformer layers to 1, with $T=20$ during training. In this scenario, the additional computation introduced by the pre- and post-transformer modules constitutes approximately 1/10 of the total computation. During inference, where $T$ is typically much larger than during training (e.g., T=200 loop iterations in Figure \ref{fig:T}), the additional computational overhead is further reduced to about 1/100.


\vskip -1em

\section{Conclusion and Discussion}
\label{sec:conclusion}
In this paper, we introduce a novel AlgoFormer, an algorithm learner designed from the looped transformer, distinguished by its algorithmic structures. 
We provide an insight that the efficient transformer framework can be designed by considering the prior knowledge of the task and the structure of potential algorithms.
Comprising three sub-transformers, each playing a distinct role in algorithm learning, the AlgoFormer demonstrates expressiveness and efficiency while maintaining a low parameter size. 
Theoretical analysis establishes that the AlgoFormer can tackle challenging in-context learning tasks, mirroring human-designed algorithms. Our experiments further validate our claim, showing that the proposed transformer outperforms both the standard transformer and the vanilla looped transformer in specific algorithm learning and real-world language tasks.

While the proposed AlgoFormer framework offers flexibility and efficiency, several potential limitations and challenges warrant discussion.

First, the dependence on prior knowledge for designing concrete architecture. The design of the AlgoFormer framework relies heavily on prior knowledge of tasks and the structure of potential algorithms. In scientific computing tasks, where problems are often well-formulated and informed by domain knowledge, this reliance can be an advantage, enabling the design of task-specific, efficient architectures. However, for real-world language tasks, the abstract and unstructured nature of the problems often limits the availability of such prior knowledge. This can make it challenging to design an optimal AlgoFormer architecture for these tasks. While the insights of AlgoFormer can significantly benefit the scientific computing community, extending its design principles to language tasks may require more advanced techniques, such as automated architecture search or meta-learning, to determine suitable architectures without extensive prior knowledge.

Second, the training complexity of AlgoFormer with more complicated algorithmic structures. Although incorporating more complex algorithmic structures into AlgoFormer, such as nested looped transformers, enhances its expressiveness, training AlgoFormer can become more challenging. It inevitably introduces additional computational costs and requires adjustments to the training strategy. These challenges could limit the usability of AlgoFormer in scenarios where algorithmic structures are complicated to model.

Third, the scalability of AlgoFormer to large-scale applications. The scaling behavior of AlgoFormer has not been thoroughly investigated in this paper. The tasks studied are primarily hand-generated scientific problems or toy cases, and the language task considered is of medium scale. Similarly, the model sizes used in this study are relatively small compared to state-of-the-art large language models (LLMs).
To fully assess the potential of AlgoFormer, future research should explore its scalability to larger, more challenging tasks and datasets, as well as the associated computational and memory requirements when deploying AlgoFormer at scale.


\section*{Acknowledgements}
M. Ng’s research is partly supported by the National Key Research and Development Program of China under Grant 2024YFE0202900; HKRGC GRF 17201020 and 17300021,
HKRGC CRF C7004-21GF, and Joint NSFC and RGC N-HKU769/21.

\bibliography{main}
\bibliographystyle{tmlr}


\newpage

\appendix 

\section{Technical Proofs}
In this section, we provide comprehensive proofs for all theorems stated in the main content. 

\textbf{Notation}. We use boldface capital and lowercase letters to denote matrices and vectors respectively. Non-bold letters represent the elements of matrices or vectors, or scalars. For example, $A_{i,j}$ denotes the $(i,j)$-th element of the matrix $\bm{A}$. We use $\|\cdot\|_2$ to denote the 2-norm (or the maximal singular value) of a matrix.

\subsection{Proof for Theorem \ref{theorem_icl_representation}}
\label{appendix_proof_icl}

\textbf{Positional embedding}. The role of positional embedding is pivotal in the performance of transformers. Several studies have investigated its impact on natural language processing tasks, as referenced in  \citep{kazemnejad2023impact, ontanon2022making, press2022train}. In our theoretical construction, we deviate from empirical settings by using quasi-orthogonal vectors as positional embedding in each token vector. This choice, also employed by \citet{li2023dissecting, giannou2023looped}, is made for theoretical convenience. 

\begin{lemma}
[Quasi-orthogonal vectors]
\label{lemma_orthogonal}
    For any fixed $\epsilon > 0$, there exists a set of vectors $\{\bm{p}_{1}, \bm{p}_{2}, \cdots, \bm{p}_{N}\}$ of dimension $\mathcal{O}\left( \log N \right)$ such that $\bm{p}_{i}^{\top} \bm{p}_{i} >\bm{p}_{i}^{\top} \bm{p}_{j} + \epsilon$ for all $i \neq j$.
\end{lemma}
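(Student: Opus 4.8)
The plan is to invoke the probabilistic method: I will exhibit a random construction that succeeds with positive probability, so the desired vectors must exist. I would split the argument into first producing vectors with a common squared norm and uniformly small pairwise inner products, and then rescaling to realize the prescribed gap $\epsilon$.

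First I would draw $\bm{p}_1, \dots, \bm{p}_N$ independently, each with i.i.d.\ entries equal to $\pm 1/\sqrt{D}$ with probability $1/2$, where the dimension $D$ is to be fixed later. By construction $\bm{p}_i^{\top} \bm{p}_i = \sum_{k=1}^{D} 1/D = 1$ deterministically, so all self inner products equal $1$. For $i \neq j$, the cross term $\bm{p}_i^{\top} \bm{p}_j = \frac{1}{D}\sum_{k=1}^{D}\xi_{ik}\xi_{jk}$ is an average of $D$ independent Rademacher variables, so Hoeffding's inequality gives $\Pr[\,|\bm{p}_i^{\top} \bm{p}_j| \geq 1/2\,] \leq 2\exp(-D/8)$.

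Next I would take a union bound over the $\binom{N}{2}$ pairs: the probability that some pair violates $|\bm{p}_i^{\top} \bm{p}_j| \leq 1/2$ is at most $2\binom{N}{2}\exp(-D/8) < N^2\exp(-D/8)$, which is strictly below $1$ as soon as $D > 8\ln(N^2)$, i.e.\ $D = \mathcal{O}(\log N)$. Hence for such $D$ there is a realization with $\bm{p}_i^{\top} \bm{p}_i = 1$ and $|\bm{p}_i^{\top} \bm{p}_j| \leq 1/2$ for every $i \neq j$, so the gap $\bm{p}_i^{\top} \bm{p}_i - \bm{p}_i^{\top} \bm{p}_j$ is at least $1/2$ uniformly. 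Finally, to meet an arbitrary target $\epsilon$, I would rescale $\bm{p}_i \mapsto 2\sqrt{\epsilon}\,\bm{p}_i$; every inner product is then multiplied by $4\epsilon$, so the gap becomes at least $2\epsilon > \epsilon$ while the dimension is unchanged, yielding the strict inequality $\bm{p}_i^{\top} \bm{p}_i > \bm{p}_i^{\top} \bm{p}_j + \epsilon$.

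The only substantive step is the concentration-plus-union-bound estimate that forces $D$ to be merely logarithmic in $N$ (the Johnson--Lindenstrauss style phenomenon of near-orthogonality in logarithmic dimension); the deterministic norm and the final rescaling are bookkeeping. A minor point to watch is preserving the \emph{strict} inequality, which is why I scale by $2\sqrt{\epsilon}$ rather than $\sqrt{2\epsilon}$; one could equivalently take a slightly larger norm or use a Hoeffding threshold strictly below $1/2$.
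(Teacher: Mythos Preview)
Your argument is correct: the scaled Rademacher construction gives deterministic unit norm, Hoeffding plus a union bound over $\binom{N}{2}$ pairs forces $D = \mathcal{O}(\log N)$, and the final rescaling by $2\sqrt{\epsilon}$ cleanly converts the gap of at least $1/2$ into a strict gap exceeding $\epsilon$ without touching the dimension. The bookkeeping on the strict inequality is handled carefully.

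As for comparison with the paper: there is nothing to compare against. The paper states this lemma without proof, treating it as a known fact and pointing to prior work (Li et al.\ 2023; Giannou et al.\ 2023) that uses the same device. Your probabilistic-method proof is exactly the standard Johnson--Lindenstrauss style argument one would supply here, so it fills the gap the paper leaves open.
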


Before going through the details, the following lemma is crucial for understanding transformers as algorithm learners.

\begin{lemma}
\label{looped_transformer_gd}
    A one-layer two-head transformer exhibits the capability to implement a single step of gradient descent in multivariate regression.
\end{lemma}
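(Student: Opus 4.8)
The plan is to reduce one gradient step to two elementary operations that a single attention--plus--feed-forward layer can realize, and to argue each operation in turn. After the representation stage, write $\bm{z}_i := \Phi^{*}(\bm{x}_i)$, so the in-context objective is the least-squares loss $\frac{1}{2N}\sum_{i=1}^{N}\|\bm{W}\bm{z}_i - \bm{y}_i\|_2^2$, and one step of gradient descent reads $\bm{W}^{+} = \bm{W} - \eta\,\bm{G}$ with $\bm{G} = \frac{1}{N}\sum_{i=1}^{N}(\bm{W}\bm{z}_i - \bm{y}_i)\bm{z}_i^{\top}$. First I would fix a layout in which the current iterate $\bm{W}$ is stored (vectorized) in designated scratch coordinates of the token stream, carried alongside $\bm{z}_i$ and $\bm{y}_i$ via the positional slots. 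Implementing one descent step then amounts to: (i) assembling $\bm{G}$ in the scratch coordinates, and (ii) subtracting $\eta\bm{G}$ from the stored $\bm{W}$ through the residual connection while leaving $\bm{z}_i,\bm{y}_i$ untouched.

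The key observation I would lean on is that $\bm{G}$ is \emph{bilinear} in the data, whereas the feed-forward block (affine followed by $\mathrm{ReLU}$) cannot form the product of two variable quantities such as $\bm{W}\bm{z}_i$ and $\bm{z}_i$; attention is the correct primitive, since its score $\bm{X}^{\top}\bm{W}_K^{(i)\top}\bm{W}_Q^{(i)}\bm{X}$ is exactly a bilinear form over tokens. This is also where the two heads enter. Exploiting the alternating structure of the prompt $\bm{P}$ together with the quasi-orthogonal positional embeddings of Lemma~\ref{lemma_orthogonal}, I would have the first head attend to the feature ($\bm{z}$) tokens and assemble $\frac{1}{N}\sum_i(\bm{W}\bm{z}_i)\bm{z}_i^{\top}$, and the second head attend to the label ($\bm{y}$) tokens and assemble $\frac{1}{N}\sum_i \bm{y}_i\bm{z}_i^{\top}$; combining the two heads with opposite signs in their value maps yields $\bm{G}$, which the residual stream then uses to overwrite $\bm{W} \mapsto \bm{W} - \eta\bm{G}$.

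The hard part will be reconciling the nonnegative, \emph{normalized} softmax with the signed, unnormalized bilinear sum that $\bm{G}$ requires. My strategy is to choose $\bm{W}_K^{(i)},\bm{W}_Q^{(i)}$ so that the query--key logits are (approximately) constant across the attended tokens, forcing the softmax toward the uniform weights $\tfrac1N$ --- which is precisely the normalization the gradient needs --- while the first-order (linear) correction of the softmax, acting through the bilinear score, supplies the outer-product interaction $\bm{z}_i\bm{z}_i^{\top}$. I would control the resulting approximation error by scaling the positional-gap $\epsilon$ from Lemma~\ref{lemma_orthogonal} and the query/key norms so that the non-uniform softmax corrections are negligible (or cancel between the two heads). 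Finally I would check that a trivial identity-like feed-forward block preserves $\bm{z}_i$ and $\bm{y}_i$ and that the residual connection realizes the subtraction $\bm{W}^{+} = \bm{W} - \eta\bm{G}$, so the composite one-layer two-head block performs one gradient-descent step up to the controlled softmax error, as claimed.
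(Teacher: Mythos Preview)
Your plan has a real gap at the step where one head is supposed to ``assemble $\frac{1}{N}\sum_i(\bm{W}\bm{z}_i)\bm{z}_i^{\top}$'' from a layout that stores $\bm{W}$ (vectorized) in scratch coordinates. For that head, the value at data token $i$ would have to be (a fixed linear image of) $\bm{W}\bm{z}_i$; but $\bm{W}\bm{z}_i$ is \emph{bilinear} in the pair $(\bm{W},\bm{z}_i)$, and a fixed value matrix $\bm{W}_V$ applied to a token containing $\text{vec}(\bm{W})$ and/or $\bm{z}_i$ can only produce linear combinations. The single bilinearity an attention head supplies is the scalar score $k_i^{\top}q_j$, not a bilinear \emph{value}, so with your layout one layer cannot manufacture $\bm{W}\bm{z}_i$ for every $i$ and then the outer product with $\bm{z}_i$. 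The same obstruction blocks materializing the full matrix $\bm{G}$ at a single query token: the head's output there is $\sum_i v_i\,a_{ij}$ with $v_i$ a vector and $a_{ij}$ a scalar, which cannot deposit a general $d_y\times d_z$ outer product into scratch coordinates.

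The paper avoids this by never storing $\bm{W}$ or $\bm{G}$ at all. Each feature token carries the \emph{residual} $\bm{A}_k\bm{x}_i-\bm{y}_i$ (and the test token carries $\bm{A}_k\bm{x}_{\text{test}}$); the head then takes that residual as value and uses $\bm{x}_i^{\top}\bm{x}_j$ as the linearized score, so the output at token $j$ is directly $\sum_i(\bm{A}_k\bm{x}_i-\bm{y}_i)\,\tfrac{1}{N}\bm{x}_i^{\top}\bm{x}_j=\bm{G}\bm{x}_j$, and the residual connection updates $\bm{A}_k\bm{x}_j-\bm{y}_j\mapsto\bm{A}_{k+1}\bm{x}_j-\bm{y}_j$. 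The two heads are not split into ``$\bm{W}\bm{z}$ part'' and ``$\bm{y}$ part'' as you propose; the second head exists only to cancel the constant offset coming from the softmax linearization (a dominating logit $C$ on one token is used so that $e^{C}\,\text{softmax}(\cdot)\approx 1+\text{score}$, and the extra head removes the ``$1$''). Your first-order-softmax intuition is in the right direction, but the workable invariant is ``store $\bm{A}_k\bm{x}_i-\bm{y}_i$ per token,'' not ``store $\bm{W}$.''
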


\begin{proof}
    Let us consider the input prompt with positional embedding as follows:
    \begin{equation*}
        \bm{P} := \left[ \begin{array}{ccccccc}
            \bm{x}_1 & \bm{0} & \cdots & \bm{x}_{N} & \bm{0} & \bm{x}_{\text{test}} & \bm{0} \\
            \bm{0} & \bm{y}_{1} & \cdots & \bm{0} & \bm{y}_{N} & \bm{0} & \bm{0} \\  
            \frac{1}{N}  \bm{x}_1 & \bm{0} & \cdots & \frac{1}{N} \bm{x}_{N} & \bm{0} & \bm{0} & \bm{0} \\
            \bm{A}_{k}\bm{x}_{1} - \bm{y}_{1} & \bm{0} & \cdots &  \bm{A}_{k}\bm{x}_{N} - \bm{y}_{N} & \bm{0} & \bm{A}_{k}\bm{x}_{\text{test}} & \bm{0} \\
            1 & 0 & \cdots & 1 & 0 & 1 & 0 \\
            0 & 1 & \cdots & 0 & 1 & 0 & 1 \\
            0 & 0 & \cdots & 0 & 0 & 1 & 0 \\
            0 & 0 & \cdots & 0 & 0 & 0 & 1
        \end{array} \right],
    \end{equation*}
    where the 0-1 indicators are used to identify features, labels, and the test data, respectively. We denote the loss function for the multi-variate regression given samples $\{\bm{x}_{1}, \bm{y}_{1}, \cdots, \bm{x}_{N}, \bm{y}_{N}\}$ as
    \begin{equation*}
        \mathcal{L}\left( \bm{A}\right) = \frac{1}{2N} \sum_{j=1}^{N} \left\| \bm{A} \bm{x}_{j} - \bm{y}_{j} \right\|_2^2,
    \end{equation*}
    then
    \begin{equation*}
        \frac{\partial \mathcal{L}}{\partial \bm{A}}(\bm{A}_{k}) = \frac{1}{N} \sum_{j=1}^{N} \left(\bm{A}_{k} \bm{x}_{j} - \bm{y}_{j} \right) \bm{x}_{j}^{\top}.
    \end{equation*}
    Now, let us define
    \begin{equation*}
        \bm{W}_{Q} \bm{P} = \left[ \begin{array}{ccccccc}
            c\bm{x}_1 & \bm{0} & \cdots & c\bm{x}_{N} & \bm{0} & c\bm{x}_{\text{test}} & \bm{0} \\
            1 & 1 & \cdots & 1 & 1 & 1 & 1 
        \end{array} \right],
    \end{equation*}
    \begin{equation*}
        \bm{W}_{K} \bm{P} = \left[ \begin{array}{ccccccc}
            \frac{1}{N}\bm{x}_1 & \bm{0} & \cdots & \frac{1}{N}\bm{x}_{N} & \bm{0} & \bm{0} & \bm{0} \\
            0 & 0 & \cdots & 0 & 0 & 0 & C
        \end{array} \right],
    \end{equation*}
    and
    \begin{equation*}
        \bm{W}_{V}\bm{P} = e^{C} \left[ \begin{array}{ccccccc}
          \bm{A}_{k}\bm{x}_{1} - \bm{y}_{1} & \bm{0} & \cdots &  \bm{A}_{k}\bm{x}_{N} - \bm{y}_{N} & \bm{0} & \bm{A}_{k}\bm{x}_{\text{test}} & \bm{0}
        \end{array} \right],
    \end{equation*}
    for some scalers $C, c > 0$. Here, we denote
    \begin{equation*}
        \bm{Z} := \bm{P}^{\top} \bm{W}_{K}^{\top} \bm{W}_{Q} \bm{P} = \left[\begin{array}{ccccccc}
            \frac{c}{N}\bm{x}_1^{\top}\bm{x}_{1} & 0 & \cdots & \frac{c}{N}\bm{x}_1^{\top}\bm{x}_{N} & 0 &  \frac{c}{N}\bm{x}_1^{\top}\bm{x}_{\text{test}} & 0\\
            \vdots & \vdots & \ddots &  \vdots & \vdots & \vdots & \vdots \\
            \frac{c}{N} \bm{x}_{N}^{\top} \bm{x}_{1} & 0 & \cdots & \frac{c}{N}\bm{x}_{N}^{\top} \bm{x}_{N} & 0 & \frac{c}{N}\bm{x}_{N}^{\top} \bm{x}_{\text{test}} & 0 \\
            0 & 0 & \cdots & 0 & 0 & 0 & 0 \\
            0 & 0 & \cdots & 0 & 0 & 0 & 0 \\
            C & C & \cdots & C & C & C & C  
        \end{array} \right],
    \end{equation*}
    then
    \begin{equation*}
        e^{C} \cdot \text{softmax}\left(Z_{2i-1,2j-1} \right) \approx 1 + \frac{c}{N} \bm{x}_{i}^{\top} \bm{x}_{j},
    \end{equation*}
    where the two sides of the ``$\approx$'' can be arbitrarily close if $C>0$ is sufficiently large and $c>0$ is sufficiently small.
    The constant here can be canceled by introducing another head. Therefore, the output of the attention layer is
    \begin{equation*}
        \sum_{i=1}^{2} \bm{W}_{V}^{(i)}\bm{P} \cdot \text{softmax}\left( \bm{P}^{\top} \bm{W}_{K}^{(i)\top} \bm{W}_{Q}^{(i)} \bm{P} \right) \approx c \left[ \begin{array}{ccccccc}
            \frac{\partial \mathcal{L}}{\partial \bm{A}}(\bm{A}_{k})\bm{x}_{1} & \bm{0} & \cdots & \frac{\partial \mathcal{L}}{\partial \bm{A}}(\bm{A}_{k})\bm{x}_{N} & \bm{0} & \frac{\partial \mathcal{L}}{\partial \bm{A}}(\bm{A}_{k})\bm{x}_{\text{test}} & \bm{0}\\
        \end{array} \right].
    \end{equation*}
    The transformer layer's output, after passing through the feed-forward neural network, is expressed as:
    \begin{equation*}
        \left[ \begin{array}{ccccccc}
            \bm{x}_1 & \bm{0} & \cdots & \bm{x}_{N} & \bm{0} & \bm{x}_{\text{test}} & \bm{0} \\
            \bm{0} & \bm{y}_{1} & \cdots & \bm{0} & \bm{y}_{N} & \bm{0} & \bm{0} \\  
            \bm{A}_{k+1}\bm{x}_{1} - \bm{y}_{1} & \bm{0} & \cdots &  \bm{A}_{k+1}\bm{x}_{N} - \bm{y}_{N} & \bm{0} & \bm{A}_{k+1}\bm{x}_{\text{test}} & \bm{0} \\
            1 & 0 & \cdots & 1 & 0 & 1 & 0 \\
            0 & 1 & \cdots & 0 & 1 & 0 & 1 \\
            0 & 0 & \cdots & 0 & 0 & 1 & 0 \\
            0 & 0 & \cdots & 0 & 0 & 0 & 1
        \end{array} \right].
    \end{equation*}
    This signifies the completion of one step of gradient descent with $\bm{A}_{k+1} = \bm{A}_{k} - \eta \frac{\partial \mathcal{L}}{\partial \bm{A}}(\bm{A}_{k})$ and a positive step size $\eta > 0$. 
\end{proof}

\textbf{Proof for Theorem \ref{theorem_icl_representation}}. We start by showing that $L$-layer transformer can represent $L$-layer MLPs. It is observed that the identity operation (i.e., $\text{Attn}\left(\bm{X} \right) = \bm{X}$) can be achieved by setting $\bm{W}_{V} = \bm{0}$ due to the residual connection in the attention layer. Each feed-forward neural network in a transformer layer can represent a one-layer MLP. Consequently, the representation function $\Phi^{*}(\cdot)$ can be realized by $L$-layer transformers. At the output layer of the $L$-th layer transformer, let $\bm{A}_{0}$ be an initial guess for the weight. The current output token vectors are then given by:
    \begin{equation*}
        \bm{P} := \left[ \begin{array}{ccccccc}
            \Phi^{*}\left(\bm{x}_1\right) & \bm{0} & \cdots & \Phi^{*}\left(\bm{x}_{N}\right) & \bm{0} & \Phi^{*}\left(\bm{x}_{\text{test}}\right) & \bm{0} \\
            \bm{0} & \bm{y}_{1} & \cdots & \bm{0} & \bm{y}_{N} & \bm{0} & \bm{0} \\  \bm{A}_{0}\Phi^{*}\left(\bm{x}_1\right) & \bm{0} & \cdots &  \bm{A}_{0}\Phi^{*}\left(\bm{x}_{N}\right) & \bm{0} & \bm{A}_{k+1}\Phi^{*}\left(\bm{x}_{\text{test}}\right) & \bm{0} \\
            \bm{p}_1 & \bm{p}_1 & \cdots & \bm{p}_{N} & \bm{p}_{N} & \bm{p}_{N+1} & \bm{p}_{N+1}\\
            \frac{1}{N} & \frac{1}{N} & \cdots & \frac{1}{N} & \frac{1}{N} & 0 & 0 \\
            1 & 0 & \cdots & 1 & 0 & 1 & 0 \\
            0 & 1 & \cdots & 0 & 1 & 0 & 1 \\
            0 & 0 & \cdots & 0 & 0 & 1 & 0 \\
            0 & 0 & \cdots & 0 & 0 & 0 & 1
        \end{array} \right].
    \end{equation*}
    Here, the set of quasi-orthogonal vectors $\{\bm{p}_{1}, \cdots, \bm{p}_{N+1}\}$ is generated, according to Lemma \ref{lemma_orthogonal}.
    The next transformer layer is designed to facilitate the exchange of information between neighboring tokens. Let 
    \begin{equation*}
        \bm{W}_{K} \bm{P} = \bm{W}_{Q} \bm{P} = \left[ \begin{array}{ccccccc}
           \bm{p}_1 & \bm{p}_1 & \cdots & \bm{p}_{N} & \bm{p}_{N} & \bm{p}_{N+1} & \bm{p}_{N+1}
        \end{array} \right]
    \end{equation*}
    and
    \begin{equation*}
        \bm{W}_{V} \bm{P} = \left[ \begin{array}{ccccccc}
             \bm{0} & \bm{y}_{1} & \cdots & \bm{0} & \bm{y}_{N} & \bm{0} & \bm{0}
        \end{array} \right],
    \end{equation*}
    then 
    \begin{equation*}
        \bm{W}_{V} \bm{P} \cdot \text{softmax} \left( \bm{P}^{\top} \bm{W}_{K}^{\top} \bm{W}_{Q} \bm{P} \right) \approx \left[ \begin{array}{ccccccc}
            \frac{1}{2}\bm{y}_{1}  & \frac{1}{2}\bm{y}_{1} & \cdots & \frac{1}{2}\bm{y}_{N}  & \frac{1}{2}\bm{y}_{N} & \bm{0} & \bm{0}
        \end{array} \right],
    \end{equation*}
    where the two sides of the ``$\approx$'' can be arbitrarily close if the temperature of the softmax function is sufficiently large, due to the nearly orthogonality of positional embedding vectors. It's important to note that the feed-forward neural network is capable of approximating nonlinear functions, such as multiplication. Here, we construct a shallow neural network that calculates the multiplication between the first $d$ elements and the value $\frac{1}{N}$ in each token. Passing through the feed-forward neural network together with the indicators, we obtain the final output of the first $(L+1)$-layer transformer $\text{TF}_{\text{pre}}$:
    \begin{equation*}
         \left[ \begin{array}{ccccccc}
            \Phi^{*}\left(\bm{x}_1\right) & \bm{0} & \cdots & \Phi^{*}\left(\bm{x}_{N}\right) & \bm{0} & \Phi^{*}\left(\bm{x}_{\text{test}}\right) & \bm{0} \\
            \bm{0} & \bm{y}_{1} & \cdots & \bm{0} & \bm{y}_{N} & \bm{0} & \bm{0} \\  
            \frac{1}{N}  \Phi^{*}\left(\bm{x}_1\right) & \bm{0} & \cdots & \frac{1}{N} \Phi^{*}\left(\bm{x}_{N}\right) & \bm{0} & \bm{0} & \bm{0} \\
            \bm{A}_{0}\Phi^{*}\left(\bm{x}_1\right) - \bm{y}_{1} & \bm{0} & \cdots &  \bm{A}_{0}\Phi^{*}\left(\bm{x}_{N}\right) - \bm{y}_{N} & \bm{0} & \bm{A}_{0}\Phi^{*}\left(\bm{x}_{\text{test}}\right) & \bm{0} \\
            \bm{p}_1 & \bm{p}_1 & \cdots & \bm{p}_{N} & \bm{p}_{N} & \bm{p}_{N+1} & \bm{p}_{N+1}\\
            1 & 0 & \cdots & 1 & 0 & 1 & 0 \\
            0 & 1 & \cdots & 0 & 1 & 0 & 1 \\
            0 & 0 & \cdots & 0 & 0 & 1 & 0 \\
            0 & 0 & \cdots & 0 & 0 & 0 & 1
        \end{array} \right].
    \end{equation*}
    According to the construction outlined in Lemma \ref{looped_transformer_gd}, there exists a one-layer, two-head transformer $\text{TF}_{\text{loop}}$, independent of the exact values of input data samples (tokens), that can implement gradient descent for finding the optimal weight $\bm{A}^{*}$ in the context of multivariate regression. The optimization aims to minimize the following empirical loss:
    \begin{equation*}
        \min_{\bm{A}} \frac{1}{2N} \sum_{i=1}^{N} \left\| \bm{A} \Phi^{*}\left(\bm{x}_{i}\right) - \bm{y}_{i} \right\|_2^2.
    \end{equation*}
    After $k$-steps of looped transformer $\text{TF}_{\text{loop}}$, which corresponds to applying $k$ steps of gradient descent, the resulting token vectors follows
        \begin{equation*}
         \left[ \begin{array}{ccccccc}
            \Phi^{*}\left(\bm{x}_1\right) & \bm{0} & \cdots & \Phi^{*}\left(\bm{x}_{N}\right) & \bm{0} & \Phi^{*}\left(\bm{x}_{\text{test}}\right) & \bm{0} \\
            \bm{0} & \bm{y}_{1} & \cdots & \bm{0} & \bm{y}_{N} & \bm{0} & \bm{0} \\  
            \frac{1}{N}  \Phi^{*}\left(\bm{x}_1\right) & \bm{0} & \cdots & \frac{1}{N} \Phi^{*}\left(\bm{x}_{N}\right) & \bm{0} & \bm{0} & \bm{0} \\
            \bm{A}_{k}\Phi^{*}\left(\bm{x}_1\right) - \bm{y}_{1} & \bm{0} & \cdots &  \bm{A}_{k}\Phi^{*}\left(\bm{x}_{N}\right) - \bm{y}_{N} & \bm{0} & \bm{A}_{k}\Phi^{*}\left(\bm{x}_{\text{test}}\right) & \bm{0} \\
            \bm{p}_1 & \bm{p}_1 & \cdots & \bm{p}_{N} & \bm{p}_{N} & \bm{p}_{N+1} & \bm{p}_{N+1}\\
            1 & 0 & \cdots & 1 & 0 & 1 & 0 \\
            0 & 1 & \cdots & 0 & 1 & 0 & 1 \\
            0 & 0 & \cdots & 0 & 0 & 1 & 0 \\
            0 & 0 & \cdots & 0 & 0 & 0 & 1
        \end{array} \right].
    \end{equation*}
    These token vectors are then ready for processing by the output transformer layer $\text{TF}_{\text{post}}$. The post-transformer is designed to facilitate communication between the last two tokens and position the desired result $\bm{A}_{k}\Phi^{*}\left(\bm{x}_{\text{test}}\right)$ in the appropriate position. We can similarly set 
    \begin{equation*}
        \bm{W}_{K} \bm{P} = \bm{W}_{Q} \bm{P} = \left[ \begin{array}{ccccccc}
           \bm{p}_1 & \bm{p}_1 & \cdots & \bm{p}_{N} & \bm{p}_{N} & \bm{p}_{N+1} & \bm{p}_{N+1}
        \end{array} \right],
    \end{equation*}
    \begin{equation*}
        \bm{W}_{V} \bm{P} = \left[ \begin{array}{ccccccc}
    \bm{A}_{k}\Phi^{*}\left(\bm{x}_1\right) - \bm{y}_{1} & \bm{0} & \cdots &  \bm{A}_{k}\Phi^{*}\left(\bm{x}_{N}\right) - \bm{y}_{N} & \bm{0} & \bm{A}_{k}\Phi^{*}\left(\bm{x}_{\text{test}}\right) & \bm{0} 
        \end{array} \right],
    \end{equation*}
    and pass it through the feed-forward neural network. This results in the final output:
   \begin{equation*}
         \left[ \begin{array}{ccccccc}
            \Phi^{*}\left(\bm{x}_1\right) & \bm{0} & \cdots & \Phi^{*}\left(\bm{x}_{N}\right) & \bm{0} & \Phi^{*}\left(\bm{x}_{\text{test}}\right) & \bm{0} \\
            \bm{0} & \bm{y}_{1} & \cdots & \bm{0} & \bm{y}_{N} & \bm{0} & \bm{A}_{k}\Phi^{*}\left(\bm{x}_{\text{test}}\right)
        \end{array} \right],
    \end{equation*}
    which completes the proof.

\subsection{Proof for Theorem \ref{theorem_ar}}
\label{appendix_proof_ar}
The following lemma highlights the intrinsic ``copying" capability of transformers, a pivotal feature for autoregressive models, especially in the context of time series analysis.

\begin{lemma}
\label{lemma_copy}
A one-layer transformer with $q$ heads possesses the ability to effectively copy information from the previous $q$ tokens to the present token.
\end{lemma}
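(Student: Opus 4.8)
The plan is to realize the copy operation head by head: with $q$ heads available, I will arrange that the $j$-th head, for every current position $t$, attends almost exclusively to the token at relative position $t-j$, and then have its value matrix write that token's content into a reserved block of coordinates of the present token. Because of the residual connection in \eqref{def_transformer}, the original content of the present token is preserved, so after a single layer the present token carries its own features together with the $q$ copied feature blocks, which is exactly the asserted copying.

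First I would equip each token at position $t$ with a positional vector $\bm{p}_t$ drawn from a family admitting a relative-shift structure, i.e. a fixed linear operator $\bm{R}$ with $\bm{R}\bm{p}_t \approx \bm{p}_{t-1}$ (sinusoidal embeddings are a convenient explicit choice, since a planar rotation realizes the shift exactly), while retaining the separation property of Lemma \ref{lemma_orthogonal}. For the $j$-th head I then set $\bm{W}_K^{(j)\top}\bm{W}_Q^{(j)}$ to act as $\bm{R}^{j}$ on the positional block and to annihilate the remaining coordinates, so that the pre-softmax score between a source index $s$ and the query at $t$ equals $\bm{p}_s^{\top}\bm{R}^{j}\bm{p}_t \approx \bm{p}_s^{\top}\bm{p}_{t-j}$. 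By the separation guaranteed in Lemma \ref{lemma_orthogonal}, this inner product is maximized at $s=t-j$ and is strictly smaller, with a gap of size $\epsilon$, for every other source.

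Next, exactly as in the proof of Lemma \ref{looped_transformer_gd}, I would scale the key/query product by a large constant $C$ (a high softmax temperature) so that the column-wise softmax concentrates arbitrarily closely on the single index $s=t-j$; the uniform gap $\epsilon$ makes this concentration uniform in $t$. The value matrix $\bm{W}_V^{(j)}$ is chosen to select the feature slot to be copied and to deposit it in the $j$-th reserved output block, the blocks for distinct heads being disjoint so the $q$ heads do not interfere. Summing over heads and adding the residual term then produces a token carrying the features of positions $t, t-1, \ldots, t-q$, as claimed.

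The main obstacle I anticipate is controlling the softmax approximation uniformly: the attention must be nearly one-hot at $s=t-j$ simultaneously for all $t$, and the boundary positions $t\le q$ lacking a full set of predecessors must be handled (these can be absorbed by padding tokens or by letting the corresponding copied block default to $\bm{0}$). A secondary technical point is arranging $\bm{R}^{j}$ and the reserved coordinate blocks so that a single linear key/query map per head implements the correct offset without corrupting the non-positional coordinates; once the shift structure of the positional embeddings is fixed, this last part is pure bookkeeping.
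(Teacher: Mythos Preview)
Your proposal is correct and shares the paper's high-level idea---dedicate head $j$ to attending (almost) exclusively to position $t-j$ via a positional inner product that peaks there, then write the retrieved content into a disjoint output block---but the mechanism you use to realize the offset is genuinely different. The paper does not rely on any shift operator: instead it \emph{pre-stores} all $q$ shifted positional vectors in each token, so that the input column at position $t$ already contains the stack $(\bm{p}_t,\bm{p}_{t-1},\ldots,\bm{p}_{t-q})$. The key matrix of every head simply selects the row $\bm{p}_t$, while the query matrix of head $i$ selects the row $\bm{p}_{t-i}$; the resulting score is then literally $\bm{p}_s^\top\bm{p}_{t-i}$, and only Lemma~\ref{lemma_orthogonal} is needed. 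Boundary positions are handled not by padding but by including the integer index $t$ in each token and letting the feed-forward layer zero out the undefined copied blocks.

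The trade-off is clear. The paper's construction is more elementary---it uses nothing beyond the quasi-orthogonality of Lemma~\ref{lemma_orthogonal} and purely selector-type weight matrices---at the price of enlarging the token by $q$ extra positional blocks. Your route is more economical in embedding dimension and closer to how relative-position attention is implemented in practice, but it requires positional vectors that are \emph{simultaneously} quasi-orthogonal and shift-equivariant under a fixed linear $\bm{R}$. You assert this combination holds for sinusoidal embeddings; that is true, but it is not what Lemma~\ref{lemma_orthogonal} gives you, so to make your argument self-contained you should verify (or cite) that a sinusoidal family of dimension $\mathcal{O}(\log N)$ achieves the required uniform inner-product gap across all lags $1,\ldots,N$.
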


\begin{proof}
    We construct the input prompt with positional embedding as follows:
    \begin{equation*}
        \left[ \begin{array}{ccccc}
            \bm{x}_{0} & \bm{x}_{1} & \bm{x}_{2} & \cdots & \bm{x}_{N} \\
            \bm{p}_{0} & \bm{p}_{1} & \bm{p}_{2} & \cdots & \bm{p}_{N}\\
            \bm{p}_{-1} & \bm{p}_{0} & \bm{p}_{1} & \cdots & \bm{p}_{N-1}\\
            \vdots & \vdots & \vdots & \ddots & \vdots \\
            \bm{p}_{-q} & \bm{p}_{1-q} & \bm{p}_{2-q} & \cdots & \bm{p}_{N-q}\\
            0 & 1 & 2 & \cdots & N
        \end{array} \right].
    \end{equation*}
    The i-th head aims to connect and communicate the current token with the previous $i$-th token. Specifically, we let
    \begin{equation*}
        \bm{W}_{K}^{(i)} \bm{P} = \left[ \begin{array}{ccccc}
            \bm{p}_{0} & \bm{p}_{1} & \bm{p}_{2} & \cdots & \bm{p}_{N}
        \end{array}\right],
    \end{equation*}
    \begin{equation*}
        \bm{W}_{Q}^{(i)} \bm{P} = \left[ \begin{array}{ccccc}
            \bm{p}_{-i} & \bm{p}_{1-i} & \bm{p}_{2-i} & \cdots & \bm{p}_{N-i}
        \end{array}\right],
    \end{equation*}
    and
    \begin{equation*}
        \bm{W}_{V}^{(i)} \bm{P} = \left[ \begin{array}{cccccc}
           \bm{x}_{0} & \bm{x}_{1} & \bm{x}_{2} & \cdots & \bm{x}_{N}
        \end{array} \right],
    \end{equation*}
    we have
    \begin{equation*}
        \bm{W}_{V}^{(i)} \bm{P} \cdot \text{softmax}\left( \bm{P}^{\top} \bm{W}_{K}^{(i) \top} \bm{W}_{Q}^{(i)} \bm{P} \right) \approx \left[ \begin{array}{cccccc}
            * & \bm{x}_{1-i} & \bm{x}_{2-i} & \cdots & \bm{x}_{N-i}
        \end{array} \right].
    \end{equation*}
    Here, we use ``*'' to mask some unimportant token values.
    Therefore, the $q$-head attention layer outputs
    \begin{equation*}
        \left[ \begin{array}{cccccc}
            \bm{x}_{0} & \bm{x}_{1} & \bm{x}_{2} & \cdots & \bm{x}_{N} \\
            * & \bm{x}_{0} & \bm{x}_{1} & \cdots & \bm{x}_{N-1} \\
            * & * & \bm{x}_{0} & \cdots & \bm{x}_{N - 2} \\
            \vdots & \vdots & \vdots & \ddots & \vdots \\
            * & * & * & \cdots & \bm{x}_{N-q} \\
            0 & 1 & 2 & \cdots & N
        \end{array} \right].
    \end{equation*}
    Here, the samples are only supported on $\{\bm{x}_{t}\}$ with $0 \leq t \leq N$. It is common to alternatively define $\bm{x}_{t} = \bm{0}$ for $t < 0$.
    Passing through the feed-forward neural network, together with the indicators at the last row, we can filter out the undefined elements, i.e.,
    \begin{equation*}
        \left[ \begin{array}{cccccc}
            \bm{x}_{0} & \bm{x}_{1} & \bm{x}_{2} & \cdots & \bm{x}_{N} \\
            \bm{0} & \bm{x}_{0} & \bm{x}_{1} & \cdots & \bm{x}_{N-1} \\
            \bm{0} & \bm{0} & \bm{x}_{0} & \cdots & \bm{x}_{N - 2} \\
            \vdots & \vdots & \vdots & \ddots & \vdots \\
            \bm{0} & \bm{0} & \bm{0} & \cdots & \bm{x}_{N-q} \\
            0 & 1 & 2 & \cdots & N
        \end{array} \right].
    \end{equation*}
\end{proof}

\textbf{Proof for Theorem \ref{theorem_ar}}
According to Lemma \ref{lemma_copy} and Theorem \ref{theorem_icl_representation}, the $(L+2)$-layer pre-transformer $\text{TF}_{\text{pre}}$ is able to do copying and transformation of the representation function. The output after the preprocessing is given by
        \begin{equation*}
        \left[ \begin{array}{cccccc}
            \bm{x}_{0} & \bm{x}_{1} & \bm{x}_{2} & \cdots & \bm{x}_{N} \\
            \Phi^{*}\left( \bm{x}_{1-q:0} \right) & \Phi^{*}\left( \bm{x}_{2-q:1} \right) & \Phi^{*}\left( \bm{x}_{3-q:2} \right) & \cdots & \Phi^{*}\left( \bm{x}_{N+1-q:N} \right)
        \end{array} \right],
    \end{equation*}
    where we denote $\bm{x}_{i:j}$ as the concatenation $[\bm{x}_{i}, \bm{x}_{i+1}, \cdots, \bm{x}_{j}]$ for notational simplicity.
    Similar to the construction in Lemma \ref{looped_transformer_gd}, a one-layer two-head transformer $\text{TF}_{\text{loop}}$ is capable of implementing gradient descent on the multivariate regression. Finally, the post-transformer $\text{TF}_{\text{post}}$ moves the desired result to the output.

\subsection{Proof for Theorem \ref{theorem_cot_mlp}}
\label{appendix_proof_cot}
According to Lemma 5 in \citep{li2023dissecting}, a seven-layer, two-head pre-transformer $\text{TF}_{\text{pre}}$ is introduced for preprocessing the input CoT sequence. This pre-transformer performs filtering, transforming the input sequence into the structured form given by \Eqref{cot_preprocess}. 
\begin{equation}
\label{cot_preprocess}
    \left[ \begin{array}{cccccccccccc}
        \bm{0} & \cdots & \bm{s}_1^{\ell-1} & \bm{0} & \cdots & \bm{s}_{2}^{\ell-1} & \bm{0} & \cdots & \bm{0} & \bm{0} & \cdots & \hat{\bm{s}}^{\ell-1}\\
        \bm{0} & \cdots & \bm{0} &  \bm{s}_1^{\ell} & \cdots & \bm{0} & \bm{s}_{2}^{\ell} & \cdots & \bm{0} &  \bm{0} &  \cdots &  \bm{0} \\
        0 & \cdots & 1 & 0 & \cdots & 1 & 0 & \cdots & 0 & 0 & \cdots & 1 \\
        0 & \cdots & 0 & 1 & \cdots & 0 & 1 & \cdots & 0 & 0 & \cdots & 0\\
        0 & \cdots & 0 & 0 & \cdots & 0 & 0 & \cdots & 0 & 0 & \cdots & 1\\
    \end{array} \right].
\end{equation}
Specifically, it identifies the positional index $\ell-1$ of the last token $\hat{\bm{s}}^{\ell-1}$, retains only $\bm{s}_{i}^{\ell-1}$ and $\bm{s}_{i}^{\ell}$ for $1 \leq i \leq N$, and filters out all other irrelevant tokens. In this context, the representation function $\Phi^{*}(\cdot)$ corresponds to $L$-layer leaky ReLU MLPs. Notably, the transformation $\bm{s}{i}^{\ell} = \sigma\left(\bm{W}^{\ell}\bm{s}{i}^{\ell-1}\right)$ is expressed, where $\bm{W}^{\ell}$ denotes the weight matrix at the $\ell$-th layer, and $\sigma(\cdot)$ represents the leaky ReLU activation function. Given the reversibility and piecewise linearity of the leaky ReLU activation, we can assume, without loss of generality, that $\bm{s}_{i}^{\ell} = \bm{W}^{\ell}\bm{s}_{i}^{\ell-1}$ in \Eqref{cot_preprocess}. Consequently, the problem is reduced to a multi-variate regression, and a one-layer two-head transformer $\text{TF}_{\text{loop}}$ is demonstrated to effectively implement gradient descent for determining the weight matrix $\bm{W}^{\ell}$, as shown in Lemma \ref{looped_transformer_gd}. Subsequently, the post-transformer $\text{TF}_{\text{post}}$ produces the desired result $\sigma\left(\bm{W}^{\ell}\hat{\bm{s}}{i}^{\ell-1}\right)$.

\subsection{Proof for Theorem \ref{theorem_newton}}
\label{appendix_proof_newton}
In this section, we first show that the one-layer two-head transformer can implement a single step of Newton's method in \Eqref{newton_method}, with the special form of input token vectors. 
Then, we introduce the pre-transformer, designed to convert general input tokens into the prescribed format conducive to the transformer's operation. Finally, the post-transformer facilitates the extraction of the desired results through additional computations, given that the output from the looped-transformer corresponds to an intermediate product.

\begin{lemma}
\label{lemma_newton}
    A transformer with one layer and two heads is capable of implementing one step of Newton's method in the linear regression problem in \Eqref{linear_regression}. 
\end{lemma}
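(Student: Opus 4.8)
The plan is to mirror the construction of Lemma~\ref{looped_transformer_gd}, but to have the attention layer realize one step of the Newton--Schulz matrix iteration $\bm{M}_{k+1} = 2\bm{M}_k - \bm{M}_k \bm{S} \bm{M}_k$ rather than a gradient step. The crucial observation is that we never need to store the full $d \times d$ matrix $\bm{M}_k$; it suffices to keep, in the designated rows of token $i$, the vector $\bm{v}_i^{(k)} := \bm{M}_k \bm{x}_i$ alongside $\bm{x}_i$ and the usual $0$--$1$ indicators (this is the ``special form of input token vectors''). Using $\bm{S} = \sum_{j} \bm{x}_j \bm{x}_j^{\top}$ one obtains the identity
\begin{equation*}
  \bm{M}_{k+1} \bm{x}_i = 2 \bm{v}_i^{(k)} - \bm{M}_k \bm{S} \bm{v}_i^{(k)} = 2 \bm{v}_i^{(k)} - \sum_{j} \bm{v}_j^{(k)} \bigl( \bm{x}_j^{\top} \bm{v}_i^{(k)} \bigr),
\end{equation*}
which is precisely an attention-shaped sum: the scalar $\bm{x}_j^{\top}\bm{v}_i^{(k)}$ plays the role of the (unnormalized) score between key $j$ and query $i$, and $\bm{v}_j^{(k)}$ plays the role of the value. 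Note that no symmetry of $\bm{M}_k$ is needed, since $\bm{M}_k\bm{S}\bm{v}_i^{(k)} = \sum_j (\bm{M}_k\bm{x}_j)(\bm{x}_j^{\top}\bm{v}_i^{(k)})$ expresses everything through the stored vectors $\bm{v}_j^{(k)}$ and $\bm{x}_j$, and $\bm{S}$ is never formed explicitly.

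First I would specify the prompt so that $\bm{W}_Q\bm{P}$ exposes $c\,\bm{v}_i^{(k)}$ with a constant row appended, $\bm{W}_K\bm{P}$ exposes $\bm{x}_j$ with a constant $C$ in the same extra coordinate, and $\bm{W}_V\bm{P}$ exposes $e^{C}\bm{v}_j^{(k)}$, exactly as in Lemma~\ref{looped_transformer_gd}. With $C$ large and $c$ small, the softmax linearizes, up to the same normalization as in that lemma, to $e^{C}\,\mathrm{softmax}(Z_{ji}) \approx 1 + c\,\bm{x}_j^{\top}\bm{v}_i^{(k)}$, so the first head returns $\sum_j \bm{v}_j^{(k)} + c\sum_j \bm{v}_j^{(k)}(\bm{x}_j^{\top}\bm{v}_i^{(k)})$; the second head is tuned to produce the constant term $\sum_j \bm{v}_j^{(k)}$ alone, so their difference leaves $c\sum_j \bm{v}_j^{(k)}(\bm{x}_j^{\top}\bm{v}_i^{(k)})$ in the target rows. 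The feed-forward block then rescales by $c^{-1}$ and combines with the residual copy of $\bm{v}_i^{(k)}$ to assemble $2\bm{v}_i^{(k)} - \sum_j \bm{v}_j^{(k)}(\bm{x}_j^{\top}\bm{v}_i^{(k)}) = \bm{v}_i^{(k+1)}$, while the indicator rows keep the slot layout intact.

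The main obstacle I anticipate is not the algebra of the identity but the bookkeeping that makes the construction \emph{loopable}: the output must reappear in exactly the same slot layout as the input, with $\bm{v}_i^{(k)}$ overwritten by $\bm{v}_i^{(k+1)}$ and every other row unchanged, so that one fixed set of weights implements each Newton--Schulz step under iteration. Getting the two heads to cancel the constant term cleanly while the feed-forward layer applies both the factor $2$ on the residual and the sign flip, and keeping the scaling constant $c$ consistent so that the approximation error is uniformly controllable as $C\to\infty$, is the delicate part. I would also note that the initialization $\bm{v}_i^{(0)} = \bm{M}_0\bm{x}_i = \alpha\bm{S}\bm{x}_i = \alpha\sum_j \bm{x}_j(\bm{x}_j^{\top}\bm{x}_i)$ is itself an attention sum of the same shape, so it can be produced by the identical mechanism (the pre-transformer's task in the full Theorem~\ref{theorem_newton}), leaving the present lemma to certify precisely one update step.
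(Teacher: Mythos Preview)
Your proposal is correct and follows essentially the same construction as the paper: storing only $\bm{v}_i^{(k)}=\bm{M}_k\bm{x}_i$ in each token, using one head with queries $\propto\bm{M}_k\bm{x}_i$, keys $\propto\bm{x}_j$, values $\propto\bm{M}_k\bm{x}_j$ so that the linearized softmax yields $c\,\bm{M}_k\bm{S}\bm{M}_k\bm{x}_i$, a second head to cancel the additive constant, and the feed-forward layer to rescale and form $2\bm{M}_k\bm{x}_i-\bm{M}_k\bm{S}\bm{M}_k\bm{x}_i$ in the same slot. The only cosmetic differences are where the scalar $c$ is placed (you put it on $\bm{W}_Q$, the paper on $\bm{W}_K$) and that you spell out the loopability bookkeeping and the $2\cdot(\text{residual})-\text{sign flip}$ inside the MLP, which the paper leaves implicit.
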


\begin{proof}
    Let us consider the input prompt with positional embedding as follows:
    \begin{equation*}
        \bm{P} := \left[ \begin{array}{ccccccc}
            \bm{x}_{1} & \bm{0} & \cdots & \bm{x}_{N} & \bm{0} & \bm{x}_{\text{test}} & \bm{0} \\
            0 & y_1 & \cdots & 0 & y_{N} & 0 & 0 \\
            \bm{M}_{k}\bm{x}_{1} & \bm{0} & \cdots & \bm{M}_{k}\bm{x}_{N} & \bm{0} & \bm{0} & \bm{0}\\
            1 & 0 & \cdots & 1 & 0 & 0 & 0 \\
            0 & 1 & \cdots & 0 & 1 & 0 & 0 \\
            0 & 0 & \cdots & 0 & 0 & 1 & 0 \\
            0 & 0 & \cdots & 0 & 0 & 0 & 1 \\
        \end{array} \right].
    \end{equation*}
    Let
    \begin{equation*}
        \bm{W}_{Q} \bm{P} = \left[ \begin{array}{ccccccc}
            \bm{M}_{k}\bm{x}_{1} & \bm{0} & \cdots & \bm{M}_{k}\bm{x}_{N} & \bm{0} & \bm{0} & \bm{0} \\
            1 & 1 & \cdots & 1 & 1 & 1 & 1\\
        \end{array} \right],
    \end{equation*}
    \begin{equation*}
        \bm{W}_{K} \bm{P} = \left[ \begin{array}{ccccccc}
            c\bm{x}_{1} & \bm{0} & \cdots & c\bm{x}_{N} & \bm{0} & c\bm{x}_{\text{test}} & \bm{0} \\
            0 & 0 & \cdots & 0 & 0 & 0 & C\\
        \end{array} \right],
    \end{equation*}
    and
    \begin{equation*}
        \bm{W}_{V} \bm{P} = e^{C} \left[ \begin{array}{ccccccc}
            \bm{M}_{k}\bm{x}_{1} & \bm{0} & \cdots & \bm{M}_{k}\bm{x}_{N} & \bm{0} & \bm{0} & \bm{0}
        \end{array} \right].
    \end{equation*}
    Similarly, denote $\bm{Z} := \bm{P}^{\top}\bm{W}_{K}^{\top} \bm{W}_{Q} \bm{P}$, we can establish that 
    \begin{equation}
    \label{eq1}
        e^{C} \text{softmax}\left(Z_{2i-1,2j-1} \right) \approx 1 + c \bm{x}_{i}^{\top} \bm{M}_{k} \bm{x}_{j}.
    \end{equation}
    To nullify the constant term, an additional attention head can be incorporated.  Therefore, the output takes the form:
    \begin{equation*}
        \bm{W}_{V} \bm{P} \text{softmax}\left(\bm{P}^{\top}\bm{W}_{K}^{\top} \bm{W}_{Q} \bm{P} \right) \approx \left[ \begin{array}{ccccccc}
            c\bm{M}_k \bm{S} \bm{M}_k \bm{x}_1 & * & \cdots & c\bm{M}_k \bm{S} \bm{M}_k \bm{x}_{N} & * & *
        \end{array} \right].
    \end{equation*}
    Here, we use ``*'' to mask some unimportant token values.
    Upon passing through the feed-forward neural network with indicator $\left[\begin{array}{ccccccc}
       1 & 0 & \cdots & 1 & 0 & 0 & 0
    \end{array} \right]$ and weight $1/c$, the resulting output is
    \begin{equation*}
        \left[ \begin{array}{ccccccc}
            \bm{x}_{1} & \bm{0} & \cdots & \bm{x}_{N} & \bm{0} & \bm{x}_{\text{test}} & \bm{0} \\
            0 & y_1 & \cdots & 0 & y_{N} & 0 & 0 \\
            \bm{M}_{k+1}\bm{x}_{1} & \bm{0} & \cdots & \bm{M}_{k+1}\bm{x}_{N} & \bm{0} & \bm{0} & \bm{0}\\
            1 & 0 & \cdots & 1 & 0 & 0 & 0 \\
            0 & 1 & \cdots & 0 & 1 & 0 & 0 \\
            0 & 0 & \cdots & 0 & 0 & 1 & 0 \\
            0 & 0 & \cdots & 0 & 0 & 0 & 1 \\
        \end{array} \right],
    \end{equation*}
    where $\bm{M}_{k+1} = 2 \bm{M}_k-\bm{M}_k \bm{S} \bm{M}_k$.
\end{proof}

\textbf{Proof for Theorem \ref{theorem_newton}}. For $\text{TF}_{\text{pre}}$, we adopt the following configurations:
    \begin{equation*}
        \bm{W}_{Q} \bm{P} = \left[ \begin{array}{ccccccc}
            \bm{x}_{1} & \bm{0} & \cdots & \bm{x}_{N} & \bm{0} & \bm{0} & \bm{0} \\
            1 & 1 & \cdots & 1 & 1 & 1 & 1\\
        \end{array} \right],
    \end{equation*}
    \begin{equation*}
        \bm{W}_{K} \bm{P} = \left[ \begin{array}{ccccccc}
            c\bm{x}_{1} & \bm{0} & \cdots & c\bm{x}_{N} & \bm{0} & c\bm{x}_{\text{test}} & \bm{0} \\
            0 & 0 & \cdots & 0 & 0 & 0 & C\\
        \end{array} \right],
    \end{equation*}
    and
    \begin{equation*}
        \bm{W}_{V} \bm{P} = e^{C} \left[ \begin{array}{ccccccc}
            \bm{x}_{1} & \bm{0} & \cdots & \bm{x}_{N} & \bm{0} & \bm{0} & \bm{0}
        \end{array} \right].
    \end{equation*}
    Denote $\bm{Z} := \bm{P}^{\top}\bm{W}_{K}^{\top} \bm{W}_{Q} \bm{P}$, we can show that 
    \begin{equation*}
        e^{C} \text{softmax}\left(Z_{2i-1,2j-1} \right) \approx 1 + c \bm{x}_{i}^{\top} \bm{x}_{j}.
    \end{equation*}
    We may include another attention head to remove the constant. Therefore, the output is formulated as
    \begin{equation*}
        \bm{W}_{V} \bm{P} \text{softmax}\left(\bm{P}^{\top}\bm{W}_{K}^{\top} \bm{W}_{Q} \bm{P} \right) \approx \left[ \begin{array}{ccccccc}
            c\bm{S} \bm{x}_1 & * & \cdots & c\bm{S}\bm{x}_{N} & * & *
        \end{array} \right].
    \end{equation*}
    After passing through the feed-forward neural network with indicators $\left[\begin{array}{ccccccc}
       1 & 0 & \cdots & 1 & 0 & 0 & 0
    \end{array} \right]$ and weight $\alpha/c$, the resulting output becomes:
    \begin{equation*}
        \left[ \begin{array}{ccccccc}
            \bm{x}_{1} & \bm{0} & \cdots & \bm{x}_{N} & \bm{0} & \bm{x}_{\text{test}} & \bm{0} \\
            0 & y_1 & \cdots & 0 & y_{N} & 0 & 0 \\
            \bm{M}_{0}\bm{x}_{1} & \bm{0} & \cdots & \bm{M}_{0}\bm{x}_{N} & \bm{0} & \bm{0} & \bm{0}\\
            1 & 0 & \cdots & 1 & 0 & 0 & 0 \\
            0 & 1 & \cdots & 0 & 1 & 0 & 0 \\
            0 & 0 & \cdots & 0 & 0 & 1 & 0 \\
            0 & 0 & \cdots & 0 & 0 & 0 & 1 \\
        \end{array} \right],
    \end{equation*}
    where $\bm{M}_{0} = \alpha \bm{S}$. 

    As illustrated in Lemma \ref{lemma_newton}, after $T$ iterations of the looped transformer $\text{TF}_{\text{loop}}$, it produces the following output:
    \begin{equation*}
        \left[ \begin{array}{ccccccc}
            \bm{x}_{1} & \bm{0} & \cdots & \bm{x}_{N} & \bm{0} & \bm{x}_{\text{test}} & \bm{0} \\
            0 & y_1 & \cdots & 0 & y_{N} & 0 & 0 \\
            \bm{M}_{T}\bm{x}_{1} & \bm{0} & \cdots & \bm{M}_{T}\bm{x}_{N} & \bm{0} & \bm{0} & \bm{0}\\
            1 & 0 & \cdots & 1 & 0 & 0 & 0 \\
            0 & 1 & \cdots & 0 & 1 & 0 & 0 \\
            0 & 0 & \cdots & 0 & 0 & 1 & 0 \\
            0 & 0 & \cdots & 0 & 0 & 0 & 1 \\
        \end{array} \right].
    \end{equation*}
    In the post-transformer, additional positional embeddings are introduced to address technical considerations. The input is structured as follows:
    \begin{equation*}
        \left[ \begin{array}{ccccccc}
            \bm{x}_{1} & \bm{0} & \cdots & \bm{x}_{N} & \bm{0} & \bm{x}_{\text{test}} & \bm{0} \\
            0 & y_1 & \cdots & 0 & y_{N} & 0 & 0 \\
            \bm{M}_{T}\bm{x}_{1} & \bm{0} & \cdots & \bm{M}_{T}\bm{x}_{N} & \bm{0} & \bm{0} & \bm{0}\\
            \bm{p}_{1} & \bm{p}_{1} & \cdots & \bm{p}_{N} & \bm{p}_{N} & \bm{p}_{N+1} & \bm{p}_{N+1}\\    
            1 & 0 & \cdots & 1 & 0 & 0 & 0 \\
            0 & 1 & \cdots & 0 & 1 & 0 & 0 \\
            0 & 0 & \cdots & 0 & 0 & 1 & 0 \\
            0 & 0 & \cdots & 0 & 0 & 0 & 1 
        \end{array} \right],
    \end{equation*}
    where the positional embedding vectors $\bm{p}_{1}, \cdots, \bm{p}_{N+1}$ are designed to be nearly orthogonal (Lemma \ref{lemma_orthogonal}). 
    To initiate the weight $\bm{w}^{\text{Newton}}_{T}$, we propagate the target label $y$ to adjacent tokens using the following attention mechanism:
    \begin{equation*}
        \bm{W}_{K} \bm{P} = \bm{W}_{Q} \bm{P} = \left[ \begin{array}{ccccccc}
           \bm{p}_{1} & \bm{p}_{1} & \cdots & \bm{p}_{N} & \bm{p}_{N} & \bm{p}_{N+1} & \bm{p}_{N+1}
        \end{array} \right]
    \end{equation*}
    and
    \begin{equation*}
        \bm{W}_{V} \bm{P} = 2 \left[ \begin{array}{ccccccc}
           0 & y_1 & \cdots & 0 & y_{N} & 0 & 0 
        \end{array} \right].
    \end{equation*}
    This operation results in the attention layer producing the following output:
    \begin{equation*}
        \left[ \begin{array}{ccccccc}
            \bm{x}_{1} & \bm{0} & \cdots & \bm{x}_{N} & \bm{0} & \bm{x}_{\text{test}} & \bm{0} \\
            y_1 & y_1 & \cdots & y_{N} & y_{N} & 0 & 0 \\
            \bm{M}_{T}\bm{x}_{1} & \bm{0} & \cdots & \bm{M}_{T}\bm{x}_{N} & \bm{0} & \bm{0} & \bm{0}\\
            \bm{p}_{1} & \bm{p}_{1} & \cdots & \bm{p}_{N} & \bm{p}_{N} & \bm{p}_{N+1} & \bm{p}_{N+1}\\    
            1 & 0 & \cdots & 1 & 0 & 0 & 0 \\
            0 & 1 & \cdots & 0 & 1 & 0 & 0 \\
            0 & 0 & \cdots & 0 & 0 & 1 & 0 \\
            0 & 0 & \cdots & 0 & 0 & 0 & 1 
        \end{array} \right].
    \end{equation*}
    In the next layer, analogous to the construction in \Eqref{eq1}, we define the following transformations:
    \begin{equation*}
        \bm{W}_{Q} \bm{P} = \left[ \begin{array}{ccccccc}
            c\bm{x}_1 & \bm{0} & \cdots & c\bm{x}_{N} & \bm{0} & c\bm{x}_{\text{test}} & \bm{0} \\
            1 & 1 & \cdots & 1 & 1 & 1 & 1 
        \end{array} \right],
    \end{equation*}
    \begin{equation*}
        \bm{W}_{K} \bm{P} = \left[ \begin{array}{ccccccc}
            \bm{M}_{T}\bm{x}_{1} & \bm{0} & \cdots & \bm{M}_{T}\bm{x}_{N} & \bm{0} & \bm{0} & \bm{0} \\
            0 & 0 & \cdots & 0 & 0 & 0 & C
        \end{array} \right],
    \end{equation*}
    and
    \begin{equation*}
        \bm{W}_{V}\bm{P} = e^{C} \left[ \begin{array}{ccccccc}
          y_{1} & y_{1} & \cdots &  y_{N}  & y_{N} & 0 & 0
        \end{array} \right],
    \end{equation*}
    for some $C, c > 0$. Defining the matrix
    \begin{equation*}
        \bm{Z} := \bm{P}^{\top} \bm{W}_{K}^{\top} \bm{W}_{Q} \bm{P} = \left[\begin{array}{ccccccc}
            c\bm{x}_1^{\top}\bm{M}_{T}\bm{x}_{1} & 0 & \cdots & c\bm{x}_1^{\top}\bm{M}_{T}\bm{x}_{N} & 0 &  c\bm{x}_1^{\top}\bm{M}_{T}\bm{x}_{\text{test}} & 0\\
            \vdots & \vdots & \ddots &  \vdots & \vdots & \vdots & \vdots \\
            c \bm{x}_{N}^{\top} \bm{M}_{T}\bm{x}_{1} & 0 & \cdots & c\bm{x}_{N}^{\top} \bm{M}_{T}\bm{x}_{N} & 0 & c\bm{x}_{N}^{\top} \bm{M}_{T} \bm{x}_{\text{test}} & 0 \\
            0 & 0 & \cdots & 0 & 0 & 0 & 0 \\
            0 & 0 & \cdots & 0 & 0 & 0 & 0 \\
            C & C & \cdots & C & C & C & C  
        \end{array} \right],
    \end{equation*}
    we can show that
    \begin{equation*}
        e^{C} \cdot \text{softmax}\left(Z_{2i-1,2j-1} \right) \approx 1 + c \bm{x}_{i}^{\top} \bm{M}_{T} \bm{x}_{j},
    \end{equation*}
    where the closeness of the two sides of the approximation ``$\approx$'' can be achieved by selecting $C>0$ sufficiently large and $c>0$ sufficiently small. The constant term can be removed by introducing another head. Therefore, the output of the attention layer is expressed as
    \begin{equation*}
        \sum_{i=1}^{2} \bm{W}_{V}^{(i)}\bm{X} \cdot \text{softmax}\left( \bm{X}^{\top} \bm{W}_{K}^{(i)\top} \bm{W}_{Q}^{(i)} \bm{X} \right) \approx c \left[ \begin{array}{ccccccc}
            * & * & \cdots & * & * & \bm{w}_{T}^{\text {Newton} \top} \bm{x}_{\text{test}} & * \\
        \end{array} \right].
    \end{equation*}
    Finally, the feed-forward neural network yields the desired prediction $\bm{w}_{T}^{\text{Newton} \top} \bm{x}_{\text{test}}$.

\subsection{Proof for Theorem \ref{theorem_decoder}}
\label{appendix_proof_decoder}
In this section, we extend the realization of gradient descent, as demonstrated in Lemma \ref{looped_transformer_gd} for encoder-based transformers, to decoder-based transformers. Although the construction is similar, the key distinction lies in the decoder-based transformer's utilization of previously viewed data for regression, consistent with our intuitive understanding.
The following lemma is enough to conclude the proof for Theorem \ref{theorem_decoder}.

\begin{lemma}
    The one-layer two-head decoder-based transformer can implement one step of gradient descent in linear regression problems in \Eqref{decoder_linear_regression}.
\end{lemma}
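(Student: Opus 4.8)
The plan is to reuse the encoder construction of Lemma \ref{looped_transformer_gd} essentially verbatim, the only structural change being that the attention is now causal, so that each query token sees only its predecessors; this is precisely what makes the layer realize the per-prompt loss \eqref{decoder_linear_regression}. I would lay out the token matrix for $\bm{P}^{i}=[\bm{x}_{1},y_{1},\cdots,\bm{x}_{i-1},y_{i-1},\bm{x}_{i}]$, interleaving feature and label tokens and augmenting every token with: a slot holding a running weight vector $\bm{w}_{k}^{i}$ (initialized by the pre-transformer), the $0$--$1$ indicators that tag feature tokens, label tokens and the current token, and a positional coordinate holding the scalar $\tfrac{1}{i-1}$. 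The essential difference from the encoder case is that each token maintains \emph{its own} weight $\bm{w}_{k}^{i}$ rather than a single global $\bm{A}_{k}$: because the causal mask lets token $i$ aggregate only over $j<i$, token $i$ performs gradient descent on $\mathcal{L}(\,\cdot\,;\bm{P}^{i})$ independently of the others, exactly as required by Theorem \ref{theorem_decoder}.

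For the attention layer I would mirror the two-head softmax argument of Lemma \ref{looped_transformer_gd}. Head one takes $\bm{W}_{Q}\bm{P}$ to read off $c\,\bm{w}_{k}^{i}$ (together with a constant row) and $\bm{W}_{K}\bm{P}$ to read off the raw feature $\bm{x}_{j}$ plus a dominant entry $C$ on a fixed prefix key that every query can see; with $\bm{W}_{V}\bm{P}=e^{C}[\cdots\bm{x}_{j}\cdots]$ and the indicators masking the label and current tokens, the dominant-$C$ key fixes the softmax denominator at $\approx e^{C}$, the $e^{C}$ factor cancels it, and the causal restriction leaves per-token weights $\approx 1+c\,\bm{w}_{k}^{i\top}\bm{x}_{j}$ over $j<i$, so the head outputs $\sum_{j<i}\bm{x}_{j}+c\sum_{j<i}(\bm{w}_{k}^{i\top}\bm{x}_{j})\bm{x}_{j}$. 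Head two uses a constant query against a key carrying $y_{j}$ and value $\bm{x}_{j}$ to produce $\sum_{j<i}\bm{x}_{j}+c\sum_{j<i}y_{j}\bm{x}_{j}$, and subtracting the two heads cancels the common $\sum_{j<i}\bm{x}_{j}$ term, leaving $c\sum_{j<i}(\bm{w}_{k}^{i\top}\bm{x}_{j}-y_{j})\bm{x}_{j}$.

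Finally, the feed-forward network would multiply this $d$-vector by the positional scalar $\tfrac{1}{i-1}$ and by $\eta/c$ (using that a shallow ReLU network approximates multiplication, as already invoked in the proof of Theorem \ref{theorem_icl_representation}), yielding $\eta\,\nabla\mathcal{L}(\bm{w}_{k}^{i};\bm{P}^{i})$, and then perform the token-wise update $\bm{w}_{k+1}^{i}=\bm{w}_{k}^{i}-\eta\,\nabla\mathcal{L}(\bm{w}_{k}^{i};\bm{P}^{i})$ while leaving the feature, label, indicator and positional rows untouched. The output then recovers the input format with $\bm{w}_{k}^{i}$ advanced to $\bm{w}_{k+1}^{i}$, so one application of the layer is one gradient-descent step for \eqref{decoder_linear_regression}, and iterating the shared layer inside the loop gives Theorem \ref{theorem_decoder}.

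The step I expect to be the main obstacle is the position-dependent normalization $\tfrac{1}{i-1}$ together with the per-token weights. Since the loop applies a single shared set of matrices to all tokens, any dependence on the query position $i$ can enter only through the positional embedding; the device above routes it in by hard-coding $\tfrac{1}{i-1}$ as a positional coordinate and deferring the division to the feed-forward multiplication, which is the decoder counterpart of the fixed $\tfrac{1}{N}$ scaling baked into the key in Lemma \ref{looped_transformer_gd}. Equally delicate is arranging the indicator-based masking so that, uniformly in $i$, the causal attention sum runs over \emph{exactly} the earlier feature (and label) tokens $\{1,\dots,i-1\}$ — excluding the label rows, the current token whose label is unavailable, and the dominant prefix key — but this bookkeeping is of the same nature as the filtering already carried out in the proof of Theorem \ref{theorem_icl_representation}.
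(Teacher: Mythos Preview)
Your proposal is correct and follows essentially the same construction as the paper: per-token weights $\bm{w}_k^i$, two heads linearizing softmax via a dominant prefix key (one head producing $\sum_{j<i}(\bm{w}_k^{i\top}\bm{x}_j)\bm{x}_j$, the other $\sum_{j<i}y_j\bm{x}_j$), the causal mask supplying the $j<i$ range, and the feed-forward layer multiplying by the hard-coded positional scalar $\tfrac{1}{i-1}$. The one detail the paper makes explicit that you leave implicit is that head two needs $\bm{x}_j$ and $y_j$ on the \emph{same} token, so the input format carries an extra row copying $\bm{x}_j$ into the label token (this is what the pre-transformer in Theorem~\ref{theorem_decoder} sets up); otherwise your sketch matches the paper's proof.
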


\begin{proof}
    We consider the input prompt with positional embedding as follows:
    \begin{equation*}
        \left[ \begin{array}{cccccc}
            \bm{x}_{1} & \bm{0} & \cdots & \bm{x}_{i-1} & \bm{0} & \bm{x}_{i}  \\
            0 & y_1 & \cdots & 0 & y_{i-1} & 0\\
            \bm{0} & \bm{x}_{1} & \cdots & \bm{0} & \bm{x}_{i-1} & \bm{0} \\
            \bm{w}^{1}_{k} & \bm{0} & \cdots & \bm{w}^{i-1}_{k} & \bm{0} & \bm{w}^{i}_{k} \\
            0 & 0 & \cdots & \frac{1}{i-2} & 0 & \frac{1}{i-1}\\
            1 & 0 & \cdots & 1 & 0 & 1 \\
            1 & 0 & \cdots & 0 & 0 & 0 \\
            0 & 1 & \cdots & 0 & 0 & 0 
        \end{array} \right].
    \end{equation*}
    We construct the attention layer with
    \begin{equation*}
        \bm{W}_{Q} \bm{P}^{i} = \left[ \begin{array}{cccccc}
            \bm{w}^{1}_{k} & \bm{0} & \cdots & \bm{w}^{i-1}_{k} & \bm{0} & \bm{w}^{i}_{k} \\
            1 & 1 & \cdots & 1 & 1 & 1
        \end{array}\right],
    \end{equation*}
    \begin{equation*}
        \bm{W}_{K} \bm{P}^{i} = \left[ \begin{array}{cccccc}
            c\bm{x}_{1} & \bm{0} & \cdots & c\bm{x}_{i-1} & \bm{0} & c\bm{x}_{i} \\
            0 & C & \cdots & 0 & 0 & 0
        \end{array}\right],
    \end{equation*}
    and
    \begin{equation*}
        \bm{W}_{V} \bm{P}^{i} = e^{C} / c\left[ \begin{array}{cccccc}
          \bm{x}_{1} & \bm{0} & \cdots & \bm{x}_{i-1} & \bm{0} & \bm{x}_{i}
        \end{array}\right].
    \end{equation*}
    Here, we adopt causal attention, where the attention mechanism can only attend to previous tokens.
    The output is
    \begin{equation*}
        \left[ \begin{array}{cccccc}
            * & * & \cdots & \sum_{j=1}^{i-2} \bm{w}^{i-1 \top}_{k} \bm{x}_{j} \bm{x}_{j} & * & \sum_{j=1}^{i-1} \bm{w}^{i \top}_{k} \bm{x}_{j} \bm{x}_{j}
        \end{array} \right].
    \end{equation*}
    For the second head, we similarly let
    \begin{equation*}
        \bm{W}_{Q} \bm{P}^{i} = \left[ \begin{array}{cccccc}
            1 & 0 & \cdots & 1 & 0 & 1 \\
            1 & 0 & \cdots & 1 & 0 & 1
        \end{array}\right],
    \end{equation*}
    \begin{equation*}
        \bm{W}_{K} \bm{P}^{i} = \left[ \begin{array}{cccccc}
            0 & c y_{1} & \cdots & 0 & c y_{i-1} & 0 \\
            C & 0 & \cdots & 0 & 0 & 0
        \end{array}\right],
    \end{equation*}
    and
    \begin{equation*}
        \bm{W}_{V} \bm{P}^{i} = - e^{C} / c \left[ \begin{array}{cccccc}
          \bm{0} & \bm{x}_{1} & \cdots & \bm{0} & \bm{x}_{i-1} & \bm{0}
        \end{array}\right].
    \end{equation*}
    Then, we have the output
    \begin{equation*}
        \left[ \begin{array}{cccccc}
            * & * & \cdots & -\sum_{j=1}^{i-2} y_j \bm{x}_{j} & * & -\sum_{j=1}^{i-1} y_j \bm{x}_{j}
        \end{array} \right]
    \end{equation*}
    The attention layer outputs 
     \begin{equation*}
        \left[ \begin{array}{cccccc}
            \bm{x}_{1} & \bm{0} & \cdots & \bm{x}_{i-1} & \bm{0} & \bm{x}_{i}  \\
            0 & y_1 & \cdots & 0 & y_{i-1} & 0\\
            \bm{0} & \bm{x}_{1} & \cdots & \bm{0} & \bm{x}_{i-1} & \bm{0} \\
            \bm{w}^{1}_{k} & \bm{0} & \cdots & \bm{w}^{i-1}_{k} & \bm{0} & \bm{w}^{i}_{k} \\
           * & * & \cdots & \sum_{j=1}^{i-2} \left(\bm{w}^{i-1 \top}_{k} \bm{x}_{j} - y_j \right) \bm{x}_{j} & * & \sum_{j=1}^{i-1} \left(\bm{w}^{i \top}_{k} \bm{x}_{j} - y_j \right) \bm{x}_{j} \\
            0 & 0 & \cdots & \frac{1}{i-2} & 0 & \frac{1}{i-1}\\
            1 & 0 & \cdots & 1 & 0 & 1 \\
            1 & 0 & \cdots & 0 & 0 & 0 \\
            0 & 1 & \cdots & 0 & 0 & 0 
        \end{array} \right].
    \end{equation*}
    Since the feed-forward layer is capable of approximating nonlinear functions, e.g., multiplication, the transformer layer outputs
    \begin{equation*}
        \left[ \begin{array}{cccccc}
            \bm{x}_{1} & \bm{0} & \cdots & \bm{x}_{i-1} & \bm{0} & \bm{x}_{i}  \\
            0 & y_1 & \cdots & 0 & y_{i-1} & 0\\
            \bm{0} & \bm{x}_{1} & \cdots & \bm{0} & \bm{x}_{i-1} & \bm{0} \\
            \bm{w}^{1}_{k+1} & \bm{0} & \cdots & \bm{w}^{i-1}_{k+1} & \bm{0} & \bm{w}^{i}_{k+1} \\
            0 & 0 & \cdots & \frac{1}{i-2} & 0 & \frac{1}{i-1}\\
            1 & 0 & \cdots & 1 & 0 & 1 \\
            1 & 0 & \cdots & 0 & 0 & 0 \\
            0 & 1 & \cdots & 0 & 0 & 0 
        \end{array} \right],
    \end{equation*}
    where $\bm{w}^{j}_{k+1} = \bm{w}^{j}_{k} - \eta \frac{\partial \mathcal{L}}{\partial \bm{w}}\left(\bm{w}_{k}^{j};\bm{P}^{j} \right) = \bm{w}^{j}_{k}  - \frac{\eta}{j-1} \sum_{h=1}^{j-1} \left(\bm{w}^{h \top}_{k} \bm{x}_{h} - y_{h} \right) \bm{x}_{h}$. 
\end{proof}

\end{document}